\let\Ginclude@graphics\@org@Ginclude@graphics 
\newtheorem{thm}{Theorem}[section]
\newtheorem{prop}{Proposition}[section]
\newtheorem{lm}{Lemma}[section]
\newenvironment{manualtheorem}[1]{%
  \manualtheoreminner
}{\endmanualtheoreminner}
\newenvironment{manuallemma}[1]{%
  \manuallemmainner
}{\endmanuallemmainner}
\newtheorem{rem}{Remark}[section]
\def\GWTR{\mathrm{GWT}_\mathbb{R}}
\def\theoremSumNGwt{Let $X_1, \dots, X_N\in\GWTR$ with tail parameters $\beta_{1}, \dots, \beta_{N}$. If $X_1, \dots, X_N$ satisfy the PD condition of Definition~\ref{def:positive_dependence_condition}, then, $X_1 + \dots + X_N\in\GWTR(\beta)$ with $\beta = \min\{\beta_{1}, \dots, \beta_{N}\}$.}
\def\theoremProductGwt{Let $X_1, \dots, X_N\in\GWTR$ be independent symmetric with tail parameters $\beta_{1}, \dots, \beta_{N}$. 
Then, the product $X_1\ldots X_N\in\GWTR(\beta)$ with $\beta$ such that $\frac 1 \beta = \frac 1 {\beta_1} + \cdots+ \frac 1 {\beta_N}$.}
\def\lemmaHiddenUnitsDependence{Let $X_1, \dots, X_N$ be some possibly dependent random variables and $W_1, \dots, W_N$ be symmetric, mutually independent and independent from $X_1, \dots, X_N$, then random variables $X_1 W_1, \dots, X_N W_N$ satisfy the PD condition.
}
\def\theoremHiddenUnitsAreGWT{Consider a Bayesian neural network as described in~Equation~\eqref{eq:hidden_unit_form} with ReLU activation function. Let $\ell$-th layer weights be independent symmetric generalized Weibull-tail on~$\mathbb{R}$ with tail parameter $\beta^{(\ell)}_w$. 
Then, $\ell$-th layer pre-activations are generalized Weibull-tail on~$\mathbb{R}$ with tail parameter $\beta^{(\ell)}$ such that $\frac{1}{\beta^{(\ell)}} = \frac{1}{\beta^{(1)}_w} + \dots + \frac{1}{\beta^{(\ell)}_w}$. }
\newcommand{\edr}{\mathrm{e}} 
\newcommand{\ddr}{\mathrm{d}}
\title{
Bayesian neural network unit priors \ \\
 and generalized Weibull-tail property}
  \author{\Name{Mariia Vladimirova} \Email{mariia.vladimirova@inria.fr}\\
  \Name{Julyan Arbel} \Email{julyan.arbel@inria.fr}\\
  \Name{St\'ephane Girard} \Email{stephane.girard@inria.fr}\\
  \addr Univ. Grenoble Alpes, Inria, CNRS, LJK, 38000 Grenoble, France
 }
\begin{document}

\maketitle

\begin{abstract}
The connection between Bayesian neural networks and Gaussian processes gained a~lot of attention in the last few years. Hidden units are proven to follow a Gaussian process limit when the layer width tends to infinity. Recent work has suggested that finite Bayesian neural networks may outperform their infinite counterparts because they adapt their internal representations flexibly. To establish solid ground for future research on finite-width neural networks, our goal is to study the prior induced on hidden units. Our main result is an accurate description of hidden units tails which shows that unit priors become heavier-tailed going deeper, thanks to the introduced notion of generalized Weibull-tail. This finding sheds light on the behavior of hidden units of finite Bayesian neural networks. 
\end{abstract}
\begin{keywords}
generalized Weibull-tail, sub-Weibull, Bayesian neural networks
\end{keywords}

\section{Introduction}

Theoretical insights and the development of a comprehensive theory are often the driving force underlying the development of new and improved methods. Neural networks are powerful models, but they still lack comprehensive theoretical support. The Bayesian approach applied to neural networks is considered one of the best-suited frameworks to obtain theoretical explanations and improve the models. 

Infinite-width Bayesian neural networks are well-studied. Induced priors in Bayesian neural networks with Gaussian weights are Gaussian processes when the number of hidden units per layer tends to infinity~\citep{neal1996bayesian,matthews2018gaussian,lee2018deep,garriga2019deep}. Stable distributions also lead to stable processes, which are generalizations of Gaussian ones~\citep{favaro2020stable}.

Since in some cases finite models perform better~\citep{lee2018deep, garriga2019deep, lee2020finite}, there is a need for theoretical justifications. One of the ways is to study induced priors. By analyzing the priors over representations, \citet{aitchison2020bigger} suggests that finite Bayesian neural networks may generalize better than their infinite counterparts because of their ability to learn representations. Another idea is to find the induced priors in the functional space~\citep{wilson2020bayesian}. 

By bounding moments of distributions induced on units, \citet{vladimirova2018bayesian} proved that hidden units have heavier-tailed upper bounds and follow sub-Weibull distributions with an increasing tail parameter depending on the layer depth. Those bounds are optimal as they are achieved for shallow Bayesian neural networks; however, they are not accurate. 

Recently, \citet{zavatone2021exact,noci2021precise} showed that there exists a precise description of induced unit priors through Meijer G-functions. These are full descriptions of priors at the unit level. The results are in accordance with the heavy-tailed nature and asymptotic expansions in a wide regime, but it has restrictions. First, the setting is simplified: linear or ReLU activation functions and Gaussian priors on weights. While \citet{wilson2020bayesian} argue that vague Gaussian priors in the parameter space induce useful function-space priors, in some cases, heavier-tailed priors can perform better~\citep{fortuin2021bayesian}. Second, it is hard to work with Meijer G-functions due to their complexity. Our goal is to obtain more general characterizations for hidden units.

We introduce a new concept for describing distributional properties of tails by extending the existing Weibull-tail characterization. A random variable $X$ is called \textit{Weibull-tail}~\citep{gardes2011weibull} with tail parameter $\beta > 0$, which is denoted by $X \sim \text{WT}_{\mathbb{R}_+}(\beta)$, if its cumulative distribution function $F_X$ satisfies
\begin{equation}
\label{eq:weibull-tail_def}
    \overline{F}_X(x) = 1 - F_X(x) = \edr^{-x^{\beta} l(x)}, \,\,\, \text{for }x>0,
\end{equation}
where $l(x)$ is a slowly-varying function, i.e. it is a positive function such that for all $t > 0$
$\lim_{x\to\infty} \frac{l(tx)}{l(x)} = 1$.
We note that the Weibull-tail property only considers the right tail of distributions. Here we adapt the Weibull-tail characterization to the whole space $\mathbb{R}$ by taking into consideration the left tail as well. Additionally, we introduce \textit{generalized Weibull-tail} random variables with tail parameter $\beta > 0$ which have Weibull-tail upper and lower bounds for both tails (Definition~\ref{def:gen_weibull-tail_r}). Such a characterization is  easily interpretable and stable under basic operations, even for dependent random variables. 
The family of generalized Weibull-tail distributions covers a large variety of fundamental distributions such as Gaussian ($\beta = 2$), gamma ($\beta = 1$), Weibull ($\beta > 0$), to name a few, and turns out to be a key tool to describe distributional tails.\vspace{0.5em}

\noindent\textbf{Contributions.} We make the following contributions:
\vspace{-0.15cm}
\begin{itemize}
\setlength\itemsep{-0.1cm}
    \item We introduce a new notion of tail characteristics called \textit{generalized Weibull-tail}, a version of Weibull-tail characteristics on $\mathbb{R}_+$ extended to variables on $\mathbb{R}$~(Section~\ref{section:weibull_tail}). The additional advantage of this notion is stability under basic operations such as multiplication by a constant and summation. 
    
    \item With the results on generalized Weibull-tail characterization and dependence, we obtain an accurate characterization of the heavy-tailed nature of hidden units in Bayesian neural networks (Section~\ref{section:bnn}). We establish these results under possibly heavy-tailed priors and relatively mild assumptions on the non-linearity. The conclusions of~\citet{vladimirova2018bayesian,zavatone2021exact,noci2021precise} which consider only Gaussian priors, mostly follow as corollaries of the obtained characterization. 
    The comparison of different characterizations and related works are deferred to Sections~\ref{section:comparison} and \ref{section:future_applications}. 
\end{itemize}

\section{Generalized Weibull-tail random variables}
\label{section:weibull_tail}

The study of the distributional tail behavior arises in many applied probability models of different areas, such as hydrology~\citep{strupczewski2011tails}, finance~\citep{rachev2003handbook} and  insurance risk theory~\citep{mcneil2015quantitative}. Since exact distributions are not available in most cases, deriving asymptotic relationships for their tail probabilities becomes essential. In this context, an important role is played by so-called Weibull-tail distributions satisfying Equation~\eqref{eq:weibull-tail_def}~\citep{gardes2011weibull,gardes2016estimation}.

A majority of works focuses on right tails of distributions and develops a theory only applicable to right tails, while it is essential to study both right and left tails of distributions. 
We extend the notion of Weibull-tail on $\mathbb{R}_+$ to $\mathbb{R}$ and introduce generalized Weibull-tail random variables on~$\mathbb{R}$ in the following definition. They are stable (under basic operations) extensions of Weibull-tail random variables on~$\mathbb{R}$ (Appendix~\ref{appendix:weibull-tail_properties_proofs}).

\begin{definition}[Generalized Weibull-tail on $\mathbb{R}$]
\label{def:gen_weibull-tail_r}
A random variable $X$ is generalized Weibull-tail on $\mathbb{R}$ with tail parameter $\beta > 0$ if both its right and left tails are upper and lower bounded by some Weibull-tail functions with tail parameter~$\beta$:
\begin{align*}
    \edr^{-x^{\beta} l_1^r(x)} \le & \overline{F}_X(x) \le \edr^{-x^{\beta} l_2^r(x)}, \quad \ \ \ \text{for }x > 0 \text{ and $x$ large enough}, \\
    \edr^{-|x|^{\beta} l_1^l(|x|)} \le & F_X(x) \le \edr^{-|x|^{\beta} l_2^l(|x|)}, \quad \text{for } x < 0 \text{ and $-x$ large enough},
\end{align*}
where $l_1^r$, $l_2^r$, $l_1^l$ and $l_2^l$ are slowly-varying functions. 
We note $X \sim \GWTR(\beta)$. 
\end{definition}

This family includes widely used distributions such as
Gaussian ($\beta = 2$), Laplace ($\beta = 1$) and generalized Gaussian distributions. These distributions are also symmetric (around 0), so their left and right tails are equal. Thus, it naturally leads to obtain tail characteristics for symmetric distributions by considering random variables whose absolute value is Weibull-tail on $\mathbb{R}_+$. See Appendix~\ref{appendix:weibull-tail_properties_proofs} for details. 
While Weibull-tail random variables are also generalized Weibull-tail, the opposite is not always true. Consider slowly-varying functions $l_1\equiv1$ and $l_2\equiv2$. Then function $l(\cdot)=1+\cos^2(\ln(\cdot))$ satisfies $l_1 \le l \le l_2$ but is not slowly varying. However, for any $\beta \ge 2$, function $\overline{F}_X(x) = \edr^{-x^{ \beta} l(x)}$ is the survival function (it is decreasing) of some random variable $X$ and it satisfies $\edr^{-x^{\beta} l_2(x)} \le \overline{F}_X(x) \le \edr^{-x^{\beta} l_1(x)}$. Therefore, $X$ is $\text{GWT}(\beta)$ but not $\text{WT}(\beta)$.

Next, we aim to obtain a tail characterization for the sum of generalized Weibull-tail variables on $\mathbb{R}$, where we allow random variables to be dependent. Further, we show that under the following assumption of positive dependence, the sum has a tail parameter equal to the minimum among the considered ones (Theorem~\ref{theorem:sum_of_N_gwt_rvs_on_r}).

\begin{definition}[Positive dependence condition]
\label{def:positive_dependence_condition}
Random variables $X_1, \dots, X_N$ satisfy the \textbf{positive dependence (PD) condition} if the following inequalities hold for all $z \in \mathbb{R}$ and some constant $C> 0$:
\begin{align*}
\mathbb{P}\left(X_1\ge 0,\ldots,X_{N-1} \ge 0 | X_N \ge z\right) \ge C, \quad 
\mathbb{P}\left(X_1\le 0,\ldots,X_{N-1} \le 0 | X_N \le z \right) \ge C. 
\end{align*}
\end{definition}

\begin{rem}
   The choice of zeros and $z$ in the PD condition is arbitrary: one can choose any $z_1, \dots, z_{N}$ instead such that $z_1+\cdots+z_N = z$. The choice of the $N$-th variable within $X_1, \dots, X_N$ is also arbitrary. Besides, if random variables $X_1, \dots, X_N$ are independent with non-zero right and left tails, then they satisfy the PD condition and the constant $C$ is equal to the minimum between $\mathbb{P}(X_1\le 0)\ldots \mathbb{P}(X_{N-1}\le 0)$ and $\mathbb{P}(X_1\ge 0)\ldots \mathbb{P}(X_{N-1}\ge 0)$.
\end{rem}

There is a great variety of dependent distributions that obey this dependence property including pre-activations in Bayesian neural networks (see Lemma~\ref{lemma:units_dependence_condition} for details). Note that the positive orthant dependence condition \citep[POD, see][]{nelsen2007introduction} implies the~PD~condition.

\begin{thm}[Sum of GWT$_\mathbb{R}$ variables]
\label{theorem:sum_of_N_gwt_rvs_on_r} 
\theoremSumNGwt
\end{thm}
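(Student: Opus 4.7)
The plan is to establish separately the four Weibull-tail bounds required by Definition~\ref{def:gen_weibull-tail_r}: upper and lower bounds on the right tail of $S := X_1 + \dots + X_N$, and symmetrically for the left tail. Throughout, let $i_0$ be an index achieving the minimum $\beta_{i_0} = \beta$, and denote by $l_1^{r,i}, l_2^{r,i}, l_1^{l,i}, l_2^{l,i}$ the slowly varying functions witnessing the GWT$_\mathbb{R}$ property of $X_i$.

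For the \textbf{upper bound on the right tail}, I would use the union bound, observing that $\{S \ge x\} \subseteq \bigcup_i \{X_i \ge x/N\}$, so
\begin{equation*}
    \mathbb{P}(S \ge x) \ \le \ \sum_{i=1}^N \mathbb{P}(X_i \ge x/N) \ \le \ \sum_{i=1}^N \edr^{-(x/N)^{\beta_i}\, l_2^{r,i}(x/N)}.
\end{equation*}
Terms with $\beta_i > \beta$ decay faster than any $\edr^{-x^{\beta} L(x)}$ with $L$ slowly varying, so they are absorbed into the dominating terms with $\beta_i = \beta$, which contribute $\edr^{-x^{\beta}\, l_2^{r,i}(x/N)/N^{\beta}}$; the prefactor $l_2^{r,i}(\cdot/N)/N^{\beta}$ is slowly varying in $x$. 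Collecting these contributions and absorbing the overall factor $N$ into the exponent, I obtain an upper bound of the form $\edr^{-x^{\beta} \tilde l_2^r(x)}$ with $\tilde l_2^r$ slowly varying.

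For the \textbf{lower bound on the right tail}, this is where the PD condition is essential. Invoking the remark following Definition~\ref{def:positive_dependence_condition}, I can place $X_{i_0}$ in the role of $X_N$ and split $z$ into $z_{i_0} = x$ and $z_j = 0$ for $j \ne i_0$. Then
\begin{equation*}
    \mathbb{P}(S \ge x) \ \ge \ \mathbb{P}\bigl(X_{i_0} \ge x,\ X_j \ge 0 \text{ for } j \ne i_0\bigr) \ \ge \ C\, \mathbb{P}(X_{i_0} \ge x) \ \ge \ C\, \edr^{-x^{\beta}\, l_1^{r,i_0}(x)},
\end{equation*}
and the constant $C>0$ is absorbed by writing $C = \edr^{-C'}$, yielding the lower bound $\edr^{-x^{\beta}\,\tilde l_1^r(x)}$ with $\tilde l_1^r(x) = l_1^{r,i_0}(x) + C'/x^{\beta}$ slowly varying. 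The \textbf{left tail} is treated by the same argument applied to the second inequality of the PD condition (or equivalently to $-X_1,\dots,-X_N$), giving matching bounds in terms of $l_1^{l,i}$ and $l_2^{l,i}$.

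The main obstacle, and essentially the only subtle point, is verifying that after these manipulations (summing several exponentials, rescaling arguments by $1/N$, and absorbing multiplicative constants through the exponent) the resulting prefactors are slowly varying. This reduces to the standard closure properties: slowly variation is preserved under multiplication by positive constants and under arguments rescaled by constants, and a perturbation of order $O(x^{-\beta})$ added to a positive slowly varying function is negligible and preserves the limit $l(tx)/l(x) \to 1$. Once these closure facts are invoked, the four bounds assemble directly into membership in $\GWTR(\beta)$ with $\beta = \min\{\beta_1, \dots, \beta_N\}$.
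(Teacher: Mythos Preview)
Your proposal is correct and follows essentially the same route as the paper: a union bound $\{S\ge x\}\subset\bigcup_i\{X_i\ge x/N\}$ for the upper tail estimate, the PD condition to extract a lower bound of the form $C\,\mathbb{P}(X_{i_0}\ge x)$, and the analogous argument for the left tail, with constants absorbed into the slowly-varying prefactors. Your explicit use of the remark to place the minimum-index variable $X_{i_0}$ in the conditioning slot is in fact slightly more careful than the paper's write-up, which tacitly assumes this reindexing; otherwise the two arguments coincide.
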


All proofs are deferred in Appendix. The intuition of the PD condition is to prevent the tail of a sum from becoming lighter due to a negative connection. The simplest example of such negative dependence comes with counter-monotonicity where $(X,Y)$ is such that $Y=-X$. Another less trivial example is $Y = X \mathbb{I}(|X| \le m) - X \mathbb{I}(|X| > m)$ for some $m > 0$: the sum $X + Y = 2 X \mathbb{I}(|X| \le m)$ is a version of $X$ truncated to the compact set $[-m, m]$. In both cases, it is easy to see that the PD condition does not hold.  
We conclude this section with a result on the product of independent generalized Weibull-tail random variables. 

\begin{thm}[Product of $\text{GWT}_{\mathbb{R}}$ variables]
\label{theorem:product_of_double_weilbull-tail_rvs} 
\theoremProductGwt
\end{thm}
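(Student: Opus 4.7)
The plan is to reduce the statement to the absolute values $Y_i = |X_i|$ and split the tail event of the product via the exponents $\alpha_i = \beta/\beta_i$, which by the definition of $\beta$ satisfy $\sum_i \alpha_i = 1$ and $\alpha_i \beta_i = \beta$ for every $i$.

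First I would observe that $P := X_1 \cdots X_N$ is symmetric (an independent product of symmetric variables remains symmetric, e.g.\ by conditioning on $X_2, \ldots, X_N$ and using the symmetry of $X_1$). Hence both tails of $P$ equal $\tfrac{1}{2}\mathbb{P}(|P| > x)$, so it is enough to find Weibull-tail upper and lower bounds (with tail parameter $\beta$) for $\mathbb{P}(|P| > x) = \mathbb{P}(Y_1 \cdots Y_N > x)$. By the symmetric $\GWTR(\beta_i)$ assumption, $Y_i$ satisfies $\edr^{-y^{\beta_i} l_{i,1}(y)} \le \mathbb{P}(Y_i > y) \le \edr^{-y^{\beta_i} l_{i,2}(y)}$ for $y$ large, with slowly varying $l_{i,1}, l_{i,2}$ (the factor $2$ from $\mathbb{P}(Y_i > y) = 2\mathbb{P}(X_i > y)$ is absorbed into the envelopes since $\log 2 / y^{\beta_i} \to 0$).

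The key step is the set-theoretic sandwich, which follows from $\prod_i x^{\alpha_i} = x$:
\[
  \bigcap_{i=1}^N \{Y_i > x^{\alpha_i}\} \;\subseteq\; \{Y_1 \cdots Y_N > x\} \;\subseteq\; \bigcup_{i=1}^N \{Y_i > x^{\alpha_i}\}.
\]
A union bound gives
\[
  \mathbb{P}(|P| > x) \;\le\; \sum_{i=1}^N \edr^{-x^{\alpha_i \beta_i} l_{i,2}(x^{\alpha_i})} \;=\; \sum_{i=1}^N \edr^{-x^{\beta} l_{i,2}(x^{\alpha_i})},
\]
while independence of the $Y_i$ gives a matching lower bound
\[
  \mathbb{P}(|P| > x) \;\ge\; \prod_{i=1}^N \mathbb{P}(Y_i > x^{\alpha_i}) \;\ge\; \edr^{-x^{\beta} \sum_i l_{i,1}(x^{\alpha_i})}.
\]
The exponent in the lower bound is already of the desired form, with $\tilde l_1(x) := \sum_i l_{i,1}(x^{\alpha_i})$ slowly varying as a finite sum of slowly varying functions (each $l_{i,1}(x^{\alpha_i})$ is slowly varying in $x$ because $l_{i,1}$ is and $x\mapsto x^{\alpha_i}$ is a positive power).

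The step I expect to be the main (though mild) obstacle is the upper bound: turning $\sum_i \edr^{-x^\beta l_{i,2}(x^{\alpha_i})}$ into a single $\edr^{-x^\beta \tilde l_2(x)}$ with $\tilde l_2$ slowly varying. The natural choice is $\tilde l_2(x) = \min_i l_{i,2}(x^{\alpha_i})$, but verifying that a pointwise minimum of slowly varying functions is slowly varying requires a brief two-sided comparison: for fixed $t>0$, pick the minimizer at $tx$ to bound $\tilde l_2(tx)/\tilde l_2(x)$ from above, and the minimizer at $x$ to bound it from below, in each case invoking slow variation of the chosen summand. The multiplicative factor $N$ (and the factor $1/2$ from the symmetry reduction) is absorbed into $\tilde l_2$ via $\log N / x^\beta \to 0$, yielding a slowly varying $\hat l_2$ with $\mathbb{P}(P > x) \le \edr^{-x^\beta \hat l_2(x)}$. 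Applying the same conclusion to the left tail by symmetry gives $P \in \GWTR(\beta)$.
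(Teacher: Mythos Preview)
Your proof is correct, and the reduction to absolute values together with the lower bound via independence coincide exactly with the paper's argument. The difference lies in the upper bound. The paper (which treats only $N=2$, leaving general $N$ implicit by induction) uses the concavity-of-log inequality $xy \le p\,x^{1/p} + (1-p)\,y^{1/(1-p)}$ to dominate the product tail by the tail of a sum, and then invokes Lemma~\ref{lemma:gen_weibull-tail_power} (powers remain GWT) and Theorem~\ref{theorem:sum_of_N_gwt_rvs_on_r} (sums of GWT variables) to conclude. Your route is more elementary: the set inclusion $\{\prod_i Y_i > x\} \subseteq \bigcup_i \{Y_i > x^{\alpha_i}\}$ plus a union bound gives the upper envelope directly, bypassing the sum theorem entirely and handling general $N$ in one pass. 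Both approaches ultimately rely on the fact that a pointwise minimum of slowly varying functions is slowly varying (Lemma~\ref{lemma:sv_min_max} in the paper), and your two-sided comparison sketch of that fact is adequate. The paper's detour showcases how the product and sum theorems interact; your approach buys self-containment and a shorter argument.
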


Now the obtained results can be applied to Bayesian neural networks, showing that hidden units are generalized Weibull-tail on $\mathbb{R}$. 

\section{Bayesian neural networks induced priors}
\label{section:bnn}
Neural networks are hierarchical models made of layers: an input, several hidden layers, and an output. Each layer following the input layer consists of units which are linear combinations of previous layer units transformed by a nonlinear function, often referred to as the nonlinearity or activation function denoted by~$\phi$. Given an input $\mathbf{x} \in \mathbb{R}^{H_0}$, the $\ell$-th hidden layer consists of a vector whose size is called the width of the layer, denoted by $H_\ell$. The coefficients of linear combinations are called weights, denoted by $w_{ij}^{(\ell)}$ for $i=1, \dots, H_{\ell - 1}, j = 1, \dots, H_\ell$. In Bayesian neural networks, weights are assigned some prior distribution~\citep{neal1996bayesian}. 
The pre-activations and post-activations of layer $\ell$ are respectively defined as
\begin{equation}
\label{eq:hidden_unit_form}
    g_j^{(\ell)} = \sum_{i = 1}^{H_{\ell-1}}  w_{ij}^{(\ell)} h_i^{(\ell - 1)}, \quad  h_j^{(\ell)} = \phi(g_j^{(\ell)})
\end{equation}
where $h_i^{(0)}$ are elements of input vector $\mathbf{x}$, so $h_i^{(0)}$ are deterministic numerical object features.

The main ingredient for Theorem~\ref{theorem:sum_of_N_gwt_rvs_on_r} is the positive dependence condition of Definition~\ref{def:positive_dependence_condition}.  The product of hidden units and weights satisfies the positive dependence condition: 
\begin{lm}
\label{lemma:units_dependence_condition}
\lemmaHiddenUnitsDependence
\end{lm}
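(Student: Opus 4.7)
The approach is a symmetrization argument that exploits the symmetry of each $W_i$ together with the mutual independence of $W_1,\dots,W_N$ and their independence from $X_1,\dots,X_N$. Write $Y_i = X_i W_i$ and introduce auxiliary i.i.d.\ Rademacher variables $\epsilon_1,\dots,\epsilon_{N-1}$ (taking values $\pm 1$ with probability $1/2$) independent of everything else. Since each $W_i$ is symmetric, $\epsilon_i W_i \stackrel{d}{=} W_i$, and by the full independence structure this lifts to the key distributional identity
\begin{equation*}
(Y_1,\dots,Y_{N-1},Y_N) \;\stackrel{d}{=}\; (\epsilon_1 Y_1,\dots,\epsilon_{N-1} Y_{N-1},\,Y_N).
\end{equation*}
Crucially, $Y_N$ is left untouched because $W_N$ is not symmetrized, which is exactly what preserves the conditioning event in the PD condition.

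Using this identity, for every $z\in\mathbb{R}$ with $\mathbb{P}(Y_N \ge z) > 0$,
\begin{equation*}
\mathbb{P}(Y_1\ge 0,\dots,Y_{N-1}\ge 0 \mid Y_N \ge z) = \mathbb{P}(\epsilon_1 Y_1\ge 0,\dots,\epsilon_{N-1} Y_{N-1}\ge 0 \mid Y_N \ge z).
\end{equation*}
Condition the right-hand side on $(Y_1,\dots,Y_N)$. Since the $\epsilon_i$'s are i.i.d.\ Rademacher and independent of $(Y_1,\dots,Y_N)$, one has $\mathbb{P}(\epsilon_i Y_i \ge 0 \mid Y_1,\dots,Y_N) \ge 1/2$ for every $i$ (equal to $1/2$ when $Y_i\ne 0$ and to $1$ when $Y_i=0$). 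By conditional independence of the $\epsilon_i$'s, the joint probability is at least $(1/2)^{N-1}$, and integrating yields the first PD inequality with $C=(1/2)^{N-1}$. The left-tail inequality follows from the identical calculation after replacing $\ge 0$ by $\le 0$ (the bound $\mathbb{P}(\epsilon_i Y_i \le 0 \mid Y_i)\ge 1/2$ is symmetric).

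The main obstacle is essentially conceptual, namely recognizing the symmetrization device; once it is in place, the argument is transparent and, importantly, places no restriction at all on the dependence structure among $X_1,\dots,X_N$. This is exactly what is needed to feed Lemma~\ref{lemma:units_dependence_condition} into Theorem~\ref{theorem:sum_of_N_gwt_rvs_on_r}, since the previous-layer hidden units in a Bayesian neural network can be heavily and intricately dependent.
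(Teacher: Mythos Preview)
Your proof is correct and arrives at the same constant $C=2^{-(N-1)}$ as the paper. The underlying idea is identical to the paper's: exploit the symmetry of each $W_i$ together with the mutual independence of the $W_i$'s and their independence from the $X_j$'s to extract a factor $1/2$ for each of the first $N-1$ coordinates, leaving the $N$-th coordinate untouched. The only difference is packaging: the paper conditions on $(X_1,\dots,X_N)=(x_1,\dots,x_N)$ and uses $\mathbb{P}(W_i x_i\ge 0)\ge 1/2$ directly, whereas you encode the same symmetry via auxiliary Rademacher variables and the distributional identity $(Y_1,\dots,Y_{N-1},Y_N)\stackrel{d}{=}(\epsilon_1 Y_1,\dots,\epsilon_{N-1}Y_{N-1},Y_N)$, then condition on $(Y_1,\dots,Y_N)$. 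Both routes are equally short and yield exactly the same bound; your symmetrization phrasing is a standard and slightly slicker way to express the paper's computation.
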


Along with Theorem~\ref{theorem:sum_of_N_gwt_rvs_on_r}, the previous lemma implies that neural network hidden units are generalized Weibull-tail with tail parameter depending on those of the weights.

\begin{thm}[Hidden units are GWT]
\label{theorem:hidden_units_are_gwt}
\theoremHiddenUnitsAreGWT
\end{thm}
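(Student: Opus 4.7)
The plan is to proceed by induction on the layer index $\ell$, combining the sum theorem (Theorem~\ref{theorem:sum_of_N_gwt_rvs_on_r}), the product theorem (Theorem~\ref{theorem:product_of_double_weilbull-tail_rvs}), and the dependence lemma (Lemma~\ref{lemma:units_dependence_condition}). Throughout, I fix a hidden unit $g_j^{(\ell)}$ and exploit that the induction hypothesis yields the same tail parameter $\beta^{(\ell-1)}$ for every unit in the preceding layer.

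For the base case $\ell = 1$, I write $g_j^{(1)} = \sum_{i=1}^{H_0} h_i^{(0)} w_{ij}^{(1)}$ with $h_i^{(0)}$ deterministic. Each summand is a constant rescaling of a symmetric $\GWTR(\beta_w^{(1)})$ weight, hence itself symmetric $\GWTR(\beta_w^{(1)})$, since scaling by a nonzero constant only amounts to rescaling the slowly varying function. Independent summands automatically satisfy the PD condition by the remark following Definition~\ref{def:positive_dependence_condition}, so Theorem~\ref{theorem:sum_of_N_gwt_rvs_on_r} yields $g_j^{(1)} \in \GWTR(\beta_w^{(1)}) = \GWTR(\beta^{(1)})$.

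For the inductive step, I assume $g_i^{(\ell-1)} \in \GWTR(\beta^{(\ell-1)})$ for all $i$ and consider $h_i^{(\ell-1)} = \mathrm{ReLU}(g_i^{(\ell-1)})$. Since $h_i^{(\ell-1)} \ge 0$ it is not itself in $\GWTR$, but its right tail satisfies $\mathbb{P}(h_i^{(\ell-1)} > t) = \mathbb{P}(g_i^{(\ell-1)} > t)$ for $t > 0$ and is therefore sandwiched between Weibull-tail bounds with parameter $\beta^{(\ell-1)}$. To bring this asymmetric factor within the scope of Theorem~\ref{theorem:product_of_double_weilbull-tail_rvs}, I use a symmetrization trick: I introduce an auxiliary Rademacher sign $\epsilon_i$ independent of everything else and observe that, by symmetry of $w_{ij}^{(\ell)}$, the equality in distribution $w_{ij}^{(\ell)} h_i^{(\ell-1)} \stackrel{d}{=} w_{ij}^{(\ell)} \bigl(\epsilon_i h_i^{(\ell-1)}\bigr)$ holds. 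The symmetrized factor $\epsilon_i h_i^{(\ell-1)}$ is symmetric with both tails equal to $\tfrac{1}{2}\mathbb{P}(h_i^{(\ell-1)} > t)$, and the constant $\tfrac{1}{2}$ is absorbed into the slowly varying function since $t^{\beta^{(\ell-1)}} l(t) \to \infty$; hence $\epsilon_i h_i^{(\ell-1)} \in \GWTR(\beta^{(\ell-1)})$. Theorem~\ref{theorem:product_of_double_weilbull-tail_rvs} applied to the two independent symmetric factors then yields $w_{ij}^{(\ell)} h_i^{(\ell-1)} \in \GWTR(\beta^{(\ell)})$ with $1/\beta^{(\ell)} = 1/\beta_w^{(\ell)} + 1/\beta^{(\ell-1)} = 1/\beta_w^{(1)} + \cdots + 1/\beta_w^{(\ell)}$.

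It remains to sum over $i$. The weights $\{w_{ij}^{(\ell)}\}_i$ are symmetric, mutually independent, and independent of the post-activations $\{h_i^{(\ell-1)}\}_i$ (which depend only on earlier layers), so Lemma~\ref{lemma:units_dependence_condition} guarantees that the products $\{w_{ij}^{(\ell)} h_i^{(\ell-1)}\}_i$ satisfy the PD condition. A final application of Theorem~\ref{theorem:sum_of_N_gwt_rvs_on_r} then gives $g_j^{(\ell)} \in \GWTR(\beta^{(\ell)})$, since all summands share the same tail parameter, closing the induction. The main obstacle is precisely the asymmetry created by ReLU: $h_i^{(\ell-1)}$ fails to be $\GWTR$ on the left, which at first sight blocks a direct appeal to the product theorem; the Rademacher symmetrization above, powered by the symmetry of the next-layer weights, is the key manoeuvre that removes this obstruction.
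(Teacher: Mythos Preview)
Your proof is correct and follows the same inductive scheme as the paper: apply the product theorem to $w_{ij}^{(\ell)} h_i^{(\ell-1)}$, then Lemma~\ref{lemma:units_dependence_condition} for the PD condition, then the sum theorem. The only substantive difference is how the asymmetry of $h_i^{(\ell-1)}=\mathrm{ReLU}(g_i^{(\ell-1)})$ is handled: the paper simply observes that the product $w_{ij}^{(\ell)} h_i^{(\ell-1)}$ is symmetric (because the weight is symmetric and independent of the nonnegative factor) and invokes Theorem~\ref{theorem:product_of_double_weilbull-tail_rvs} directly, relying on the fact that its proof only uses $|X|,|Y|\in\text{GWT}_{\mathbb{R}_+}$ together with symmetry of the product; you instead insert an explicit Rademacher sign to force both factors into $\GWTR$ before applying the theorem as stated. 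Your route is slightly more careful with respect to the hypotheses of Theorem~\ref{theorem:product_of_double_weilbull-tail_rvs} as written, but the two arguments are otherwise the same.
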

Note that the most popular case of weight prior, iid Gaussian \citep{neal1996bayesian}, corresponds to $\text{GWT}_{\mathbb{R}}(2)$ weights. This leads to units of layer $\ell$ which are $\text{GWT}_{\mathbb{R}}(\frac{2}{\ell})$. 

To illustrate this theorem, we have built neural networks of 4 hidden layers, with 4 hidden units on each layer. 
We used a fixed input $\mathbf{x}$ of size $10^4$, which can be thought of as an image of dimension $100\times 100$. This input was sampled once for all with standard Gaussian entries. 
In order to obtain samples from the prior distribution of the neural network units, we have sampled the weights from independent centered Gaussians from which units were obtained by forward evaluation with the ReLU non-linearity. This process was iterated $n =10^6$ times. 
Note that for a $\GWTR$ random variable, $\mathbb{P}(X \ge x) = \edr^{-x^\beta l(x)}$ so the tail parameter can be expressed as:
\begin{equation}
\label{eq:beta_parameter}
    \beta = \frac{\log (- \log \mathbb{P}(X \ge x))}{\log x} - \frac{\log l(x)}{\log x}.
\end{equation}
In Figure~\ref{pic:nn_layers}, we plot $\log (- \log \mathbb{P}(X \ge x))$ as a function of $\log x$. We see that the obtained tail parameters approximations are increasing for the increasing layer number and visually correspond to the theoretical tail parameter. 

\begin{figure}[h!]
\floatbox[{\capbeside\thisfloatsetup{capbesideposition={left,top},capbesidewidth=.45\textwidth}}]{figure}[\FBwidth]
{\caption{\textit{Solid lines}: approximations of tail parameters $\beta^{(\ell)}$ based on Equation~\eqref{eq:beta_parameter} where $X$ are hidden units of layers $\ell = 1,2,3,4$ corresponding theoretically to generalized Weibull-tail with tail parameters $\beta^{(\ell)} = 2, 1, \nicefrac23, \nicefrac12$,
under the independent Gaussian weights assumption. 
  \textit{Dashed lines}: linear regressions with coefficients equal to the theoretical tail parameters $\beta^{(\ell)} = \nicefrac{2}{\ell}$ and manually selected biases to approach the solid lines for visual comparison.}  \label{pic:nn_layers}
}
{\vspace{-0.55cm}\includegraphics[width=.55\textwidth]{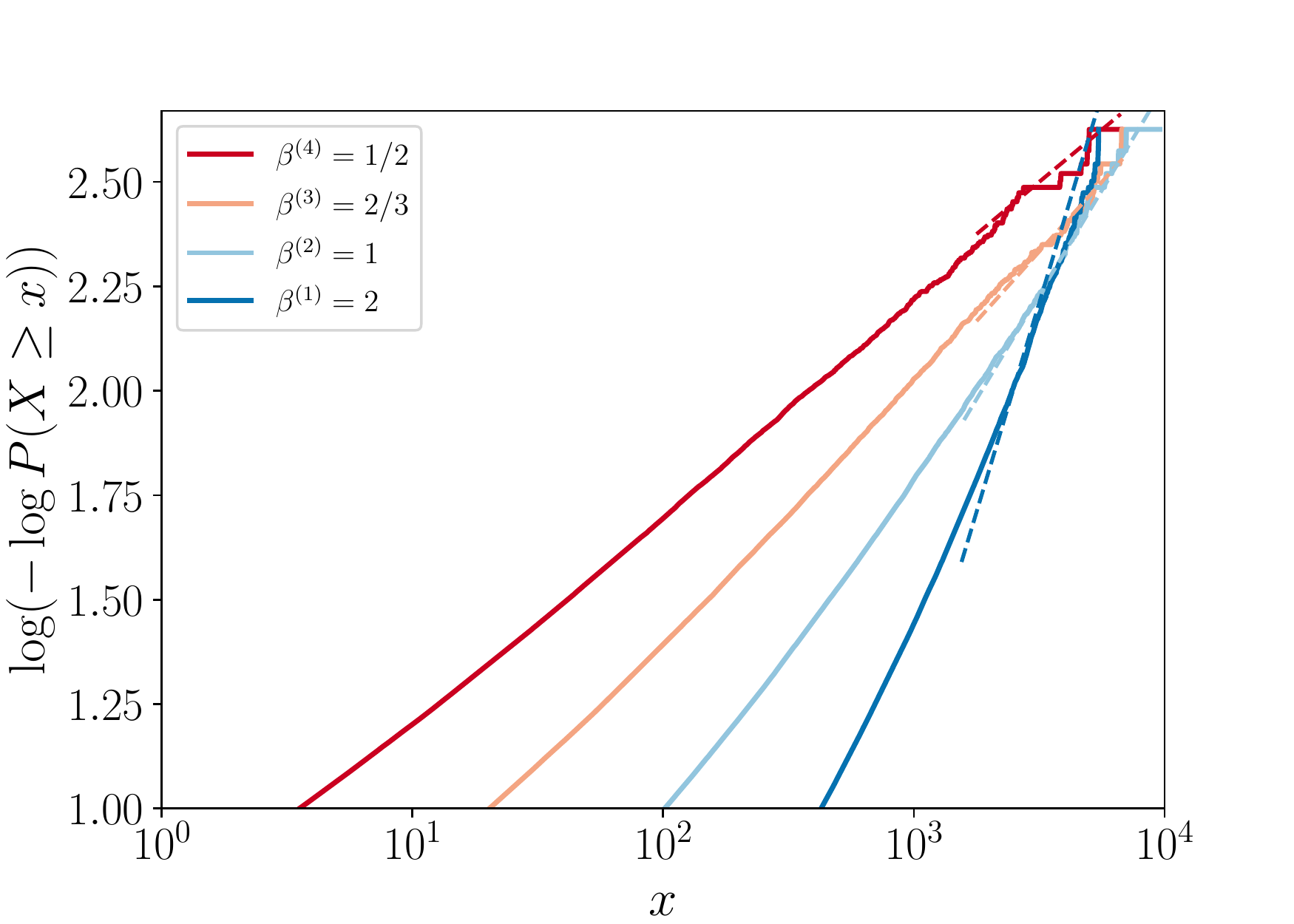}}
\end{figure}


\section{Comparison of different characterizations}
\label{section:comparison}

\subsection{Generalized Weibull-tail vs sub-Weibull}

Some of the commonly used techniques to study the tail behavior is to consider  probability tail bounds such as sub-Gaussian, sub-exponential, or their generalization to sub-Weibull distributions~\citep{vladimirova2020sub,kuchibhotla2018moving}. A non-negative random variable $X$ is called sub-Weibull with tail parameter $\theta > 0$ if its survival function is upper-bounded by that of a Weibull distribution:
\begin{align}\label{eq:sub-W-tail-def}
    \overline{F}_X(x) \le a \edr^{-bx^{1/\theta}},\,\,\, \text{for }x>0\,\,\, \text{and some } a,b>0.
\end{align}
This property ensures the existence of the moment generating function as well as bounds on moments. In contrast, the Weibull-tail property characterizes the survival or density functions without a hand on moments. 
While tail parameters in Equation~\eqref{eq:weibull-tail_def} and~\eqref{eq:sub-W-tail-def} of generalized Weibull-tail and sub-Weibull properties respectively are different, there exist connections. Notice that for any constants $a, b, \beta > 0$, function $l(x) = b - \frac{\log a}{x^\beta} \ge 0$ is slowly-varying for $x$ large enough and $a \edr^{-b x^\beta} = \edr^{-x^\beta l(x)}$. It means that if a random variable $X$ is sub-Weibull with parameter $\theta = 1/\beta > 0$, satisfying Equation~\eqref{eq:sub-W-tail-def}, then the survival function of $X$ is upper-bounded by a Weibull-tail function with tail parameter~$\beta$ and slowly-varying function $l(x) = 1$, satisfying Equation~\eqref{eq:weibull-tail_def}. 
If random variable $X$ is generalized Weibull-tail with tail parameter~$\beta$, then 
from the last item of Proposition~\ref{appendix:proposition:sv_properties}, for $a_1, a_2 > 0$ we have 
\begin{equation*}
     a_1 \edr^{-x^{\beta_1}} \le \overline{F}_X(x) = \edr^{-x^{ \beta} l(x)} \le a_2 \edr^{-x^{\beta_2}},
\end{equation*}
or {$\text{GWT}_{\mathbb{R}_+}(\beta) \subset \text{SubW}(1/\beta_2)$ and  $\text{GWT}_{\mathbb{R}_+}(\beta) \not\subset \text{SubW}(1/\beta_1)$}
for $x$ large enough and $\forall (\beta_1,\beta_2)$ such that $0 < \beta_2 < \beta < \beta_1$, as illustrated on Figure~\ref{fig:potatos}.

\begin{figure}[ht]
\vspace{-1cm}
{\small
\begin{minipage}{0.49\textwidth}
\centering
    \begin{tikzpicture}
    \node[inner sep=0pt] (potatoes) at (0,0)
    {\includegraphics[width=6.6cm]{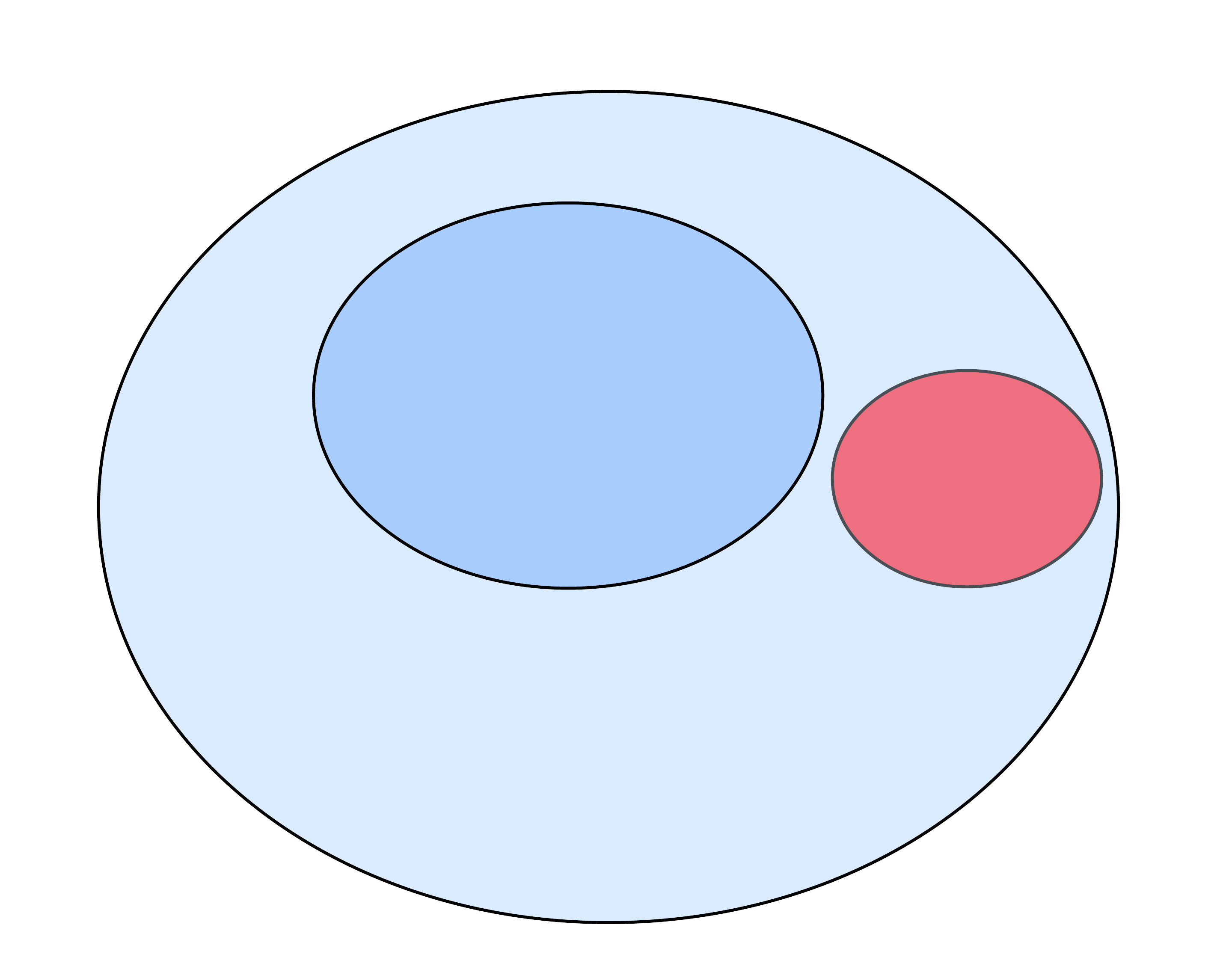}};
    \node[inner sep=0pt] (subw1) at (-0.3,-1.1) {$\text{SubW}\left(\frac{1}{\beta_1}\right)$};
    \node[inner sep=0pt] (subw2) at (0,0.4) {$\text{SubW}\left(\frac{1}{\beta_2}\right)$};
    \node[inner sep=0pt] (wt) at (1.88,0) {$\text{GWT}\left(\beta\right)$};
    \end{tikzpicture}
\end{minipage}
\begin{minipage}{0.49\textwidth}
\centering
    \begin{tikzpicture}
    \node[inner sep=0pt] (potatos) at (0,0)
    {\includegraphics[width=5cm]{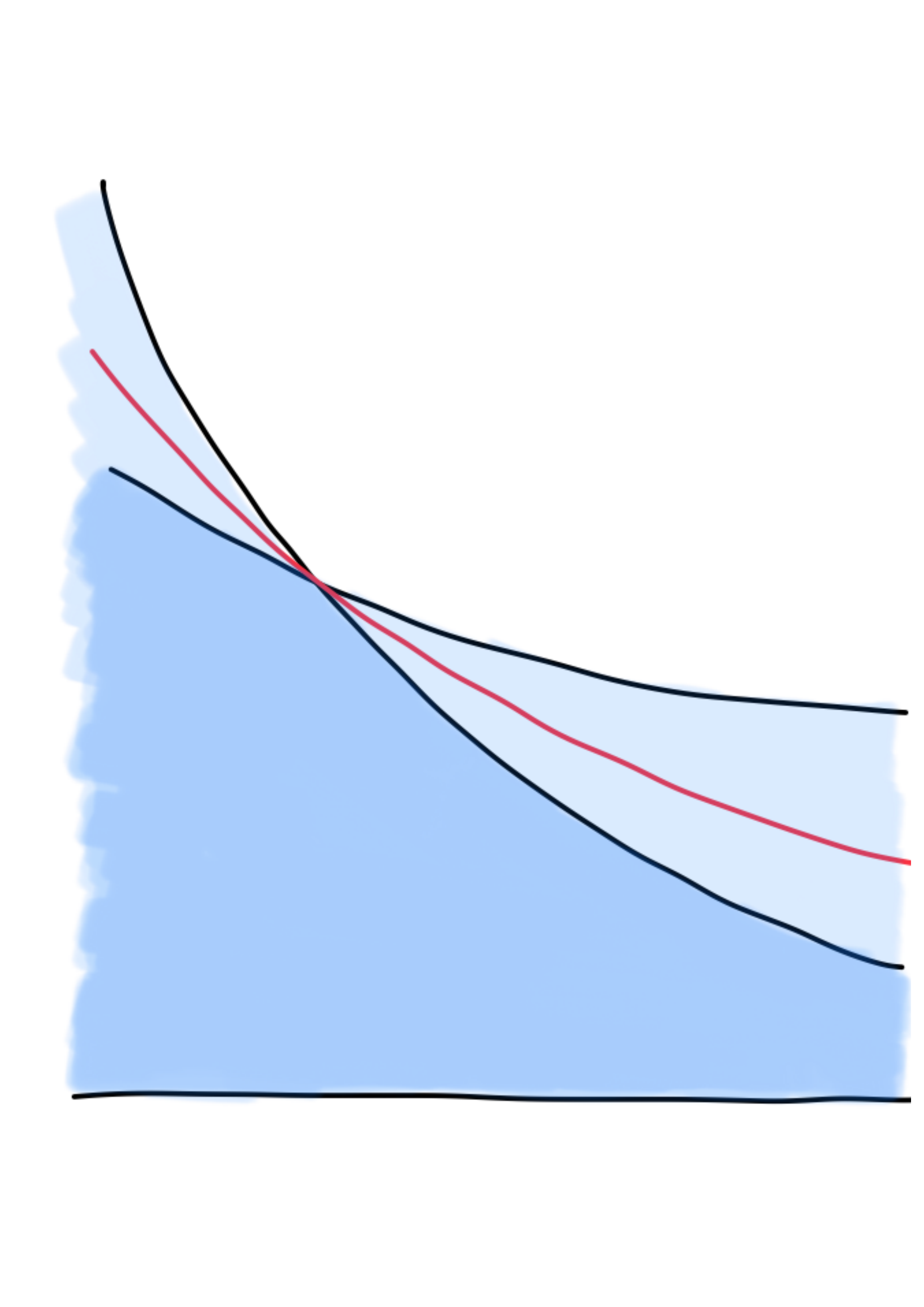}};
    \node[inner sep=0pt] (subw1) at (3.7,-0.5) {$\text{SubW}\left(\frac{1}{\beta_1}\right)$};
    \node[inner sep=0pt] (wt) at (3.5,-1.2) {\textcolor{purple}{$\text{GWT}\left(\beta\right)$}};
    \node[inner sep=0pt] (subw2) at (3.7,-1.9) {$\text{SubW}\left(\frac{1}{\beta_2}\right)$};
    \end{tikzpicture}
\end{minipage}}
\vspace{-1.4cm}
    \caption{Relation between sub-Weibull and generalized Weibull-tail characteristics.}
    \label{fig:potatos}
\end{figure}

It was recently shown in \citet{vladimirova2018bayesian} that hidden units of Bayesian neural networks with iid Gaussian priors are sub-Weibull with tail parameter proportional to the hidden layer number, that is $\theta = \frac{\ell}2$. It means that the unit distributions of hidden layer $\ell$ can be upper-bounded by some Weibull distributions $a \edr^{-x^{2/\ell}}$ for all $\ell$.  
For larger tail parameter $\theta$, Weibull distribution $a \edr^{-x^{1/\theta}}$ is heavier-tailed but being sub-Weibull does not guarantee the heaviness of the tails. However, this upper bound is optimal in the sense that it is achieved for neural networks with one hidden unit per layer.

From Theorem~\ref{theorem:hidden_units_are_gwt}, for neural networks with independent Gaussian weights, hidden units of $\ell$-th layer are generalized Weibull-tail with tail parameter $\beta = 1/\theta = 2/\ell$ so they have upper and lower bounds of the form $\edr^{-x^{2/\ell}l(x)}$ up to a constant where $l$ is some slowly-varying function. Therefore, it proves that hidden units are heavier-tailed as going deeper for any finite numbers of hidden units per layer.

\subsection{Meijer G-functions description}

In~\citet{springer1970distribution} it was shown that the probability density function of the product of independent normal variables could be expressed through a Meijer G-function. It resulted in an accurate description of induced unit priors given Gaussian priors on weights and linear or ReLU activation function ~\citep{zavatone2021exact,noci2021precise}.  It is a full description of function-space priors but under strong assumptions, requiring Gaussian priors on weights and linear or ReLU activation functions, and with convoluted expressions. In contrast, we provide results for many distributions, including heavy-tailed ones, and our results can be extended to smooth activation functions, such as PReLU, ELU, Softplus.

\section{Future applications}
\label{section:future_applications}

\paragraph*{Cold posterior effect and priors.}
\label{subsection:cold_posterior}

It was recently empirically found that Gaussian priors led to the 
\textit{cold posterior effect} in which a tempered ``cold'' posterior, obtained by exponentiating the posterior to some power largely greater than one, performs better than an untempered one~\citep{wenzel2020good}. The performed Bayesian inference is considered sub-optimal due to the need for cold posteriors, and the model is deemed misspecified. From that angle, \citet{wenzel2020good} suggested that Gaussian priors might not be a good choice for Bayesian neural networks. 
In some works, data augmentation is argued to be the main reason for this effect~\citep{izmailov2021bayesian,nabarro2021data} as the increased amount of observed data naturally leads to higher posterior contraction~\citep{izmailov2021bayesian}. 
At the same time, even considering the data augmentation for some models, the cold posterior effect is still present. 
In addition, \citet{aitchison2020statistical} demonstrates that the problem might originate in the wrong likelihood of the models and that modifying only the likelihood based on data curation mitigates the cold posterior effect. 
\citet{nabarro2021data} hypothesize that using an appropriate prior incorporating knowledge of data augmentation might provide a solution. Moreover, heavy-tailed priors have been shown to mitigate the cold posterior effect~\citep{fortuin2021bayesian}.
According to Theorem~\ref{theorem:hidden_units_are_gwt}, heavier-tailed priors lead to even heavier-tailed induced priors in function-space. 
Thus, the heavy-tail property of distributions in function-space might be a highly beneficial feature. 
\citet{fortuin2021bayesian} also proposed correlated priors for convolutional neural networks since trained weights are empirically strongly correlated. Correlated priors improve overall performance but do not alleviate the cold posterior effect.  Our theory can be extended to  correlated weight priors. This direction is promising for further uncovering the effect of weight prior on function-space prior.

\paragraph*{Edge of Chaos.}
An active line of research studies the propagation of deterministic inputs in neural networks~\citep{poole2016exponential,schoenholz2016deep,hayou2019impact}. The main idea is to explore the covariance between pre-activations for two given different data points. \citet{poole2016exponential} and \citet{schoenholz2016deep} obtained recurrence relations under the assumption of Gaussian initialization and Gaussian pre-activations. They conclude that there is a critical line, so-called \textit{Edge of Chaos}, separating signal propagation into two regions. The first one is an ordered phase in which all inputs end up asymptotically correlated. The second is a chaotic phase in which all inputs end up asymptotically independent. To propagate the information deeper in a neural network, one should choose Gaussian prior variances corresponding to the separating line. \citet{hayou2019impact} show that the smoothness of the activation function also plays an important role. 
Since this line of works considers Gaussian priors not only on the weights but also on the pre-activations, it is closely related to a wide regime where the number of hidden units per layer tends to infinity. Given that hidden units are heavier-tailed with depth, we speculate that future research will focus on finding better approximations of the pre-activation functions in  recurrence relations  obtained for finite-width neural networks.

\section{Conclusion}

We extend the theory on induced distributions in Bayesian neural networks and establish an accurate and easily interpretable characterization of hidden units tails. The obtained results confirm the heavy-tailed nature of hidden units for different weight priors.




\appendix

\section{Slowly and regularly-varying functions theory}

The set of regular-varying functions with index $\rho \in \mathbb{R}$ is denoted by $\mathcal{RV}_\rho$. Note that for $\rho = 0$, the set $\mathcal{RV}_0$ boils down to the set of slowly-varying functions. In particular, any function  $r \in \mathcal{RV}_\rho$ can be written $r(x) = x^\rho l(x)$, where $l$ is slowly-varying. 

\begin{definition}[Regularly-varying function]
Let $r$ be a  positive function. Then $r\in\mathcal{RV}_\rho$ if for all $t > 0$
    $\lim_{x\to\infty} \frac{r(tx)}{r(x)} = t^\rho$.
\end{definition}

\begin{prop}{\citep[Proposition 1.3.6]{bingham1989regular}}
\label{appendix:proposition:sv_properties}
Let $l, l_1,\ldots,l_k$ be slowly-varying functions. Then:
\begin{enumerate}
    \item $(\log l(x))/\log x \to 0$ as $x \to \infty$.
    \item $l^\alpha$ varies slowly for every $\alpha \in \mathbb{R}$.
    \item $l_1 l_2$, $l_1 + l_2$, and (if $l_2(x) \to \infty$ as $x \to \infty$) $l_1 \circ l_2$ vary slowly.
    \item If  $f(x_1, \dots, x_k)$ is a rational function with positive coefficients, $f(l_1, \dots, l_k)$ varies slowly.
    \item For any $\alpha > 0$, 
    $x^\alpha l(x) \to \infty, \ x^{-\alpha} l(x) \to 0 \ (x\to \infty)$.
\end{enumerate}
\end{prop}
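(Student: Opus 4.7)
The plan is to derive each item from the defining property $l(tx)/l(x)\to 1$ for every $t>0$, assisted by two classical tools: the Karamata representation theorem, which guarantees that any slowly-varying $l$ can be written as
\[
l(x) = c(x)\exp\left(\int_{a}^{x}\frac{\varepsilon(u)}{u}\,\ddr u\right),
\]
with $c(x)\to c\in(0,\infty)$ and $\varepsilon(u)\to 0$ as $u\to\infty$; and the uniform convergence theorem, which upgrades the pointwise limit $l(tx)/l(x)\to 1$ to uniform convergence over $t$ in any compact subset of $(0,\infty)$.

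I would start with item 1, which is the cornerstone. Taking logarithms in the Karamata representation gives
\[
\frac{\log l(x)}{\log x} = \frac{\log c(x)}{\log x} + \frac{1}{\log x}\int_{a}^{x}\frac{\varepsilon(u)}{u}\,\ddr u.
\]
The first term vanishes because $\log c(x)\to \log c$ is bounded. For the second, fix $\eta>0$, choose $U\ge a$ so that $|\varepsilon(u)|<\eta$ for $u\ge U$, split the integral at $U$, and bound its absolute value by a constant plus $\eta(\log x-\log U)$. Dividing by $\log x$ and letting $x\to\infty$ yields a limit superior at most $\eta$; since $\eta$ is arbitrary, the ratio tends to $0$.

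Items 2--4 reduce to straightforward algebra on the defining ratio. For item 2, $l^{\alpha}(tx)/l^{\alpha}(x)=(l(tx)/l(x))^{\alpha}\to 1^{\alpha}=1$ by continuity of $u\mapsto u^{\alpha}$ at $u=1$. For item 3, products factor as $(l_1l_2)(tx)/(l_1l_2)(x)=(l_1(tx)/l_1(x))(l_2(tx)/l_2(x))\to 1$, and sums are handled by rewriting
\[
\frac{(l_1+l_2)(tx)}{(l_1+l_2)(x)} = \frac{l_1(x)}{l_1(x)+l_2(x)}\cdot\frac{l_1(tx)}{l_1(x)} + \frac{l_2(x)}{l_1(x)+l_2(x)}\cdot\frac{l_2(tx)}{l_2(x)},
\]
which is a convex combination of two ratios each tending to $1$, hence tending to $1$. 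For composition with $l_2(x)\to\infty$, set $y=l_2(x)$ and $s(x)=l_2(tx)/l_2(x)\to 1$; the uniform convergence theorem applied to $l_1$ on a neighborhood of $s=1$ gives $l_1(s(x)y)/l_1(y)\to 1$, which is exactly $l_1(l_2(tx))/l_1(l_2(x))\to 1$. Item 4 is then obtained by induction on the number of arithmetic operations, combining items 2 and 3 together with the obvious closure under positive scalar multiplication; positivity of the coefficients ensures that intermediate denominators remain bounded away from zero asymptotically, so no vanishing-denominator pathology arises.

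Finally, item 5 follows from item 1 via the identity
\[
x^{\pm\alpha}\,l(x) = \exp\left(\log x\cdot\left(\pm\alpha + \frac{\log l(x)}{\log x}\right)\right),
\]
whose bracketed term tends to $\pm\alpha$ by item 1, so the exponent tends to $+\infty$ (giving $x^{\alpha}l(x)\to\infty$) or $-\infty$ (giving $x^{-\alpha}l(x)\to 0$). The main obstacle is the composition clause in item 3, which genuinely requires the uniform convergence theorem and cannot be obtained from the pointwise definition alone; everything else is essentially elementary once item 1 is in hand.
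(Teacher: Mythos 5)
Your proposal is correct, but note that the paper does not prove this proposition at all: it is imported wholesale from \citet[Proposition 1.3.6]{bingham1989regular}, so there is no in-paper argument to compare against. What you have written is, in essence, the standard proof from that reference reconstructed from first principles: the Karamata representation theorem for item 1 (your splitting of $\int_a^x \varepsilon(u)u^{-1}\,\mathrm{d}u$ at a threshold $U$ and the resulting $\limsup \le \eta$ bound is exactly right), elementary manipulation of the defining ratio for items 2 and 3 (the convex-combination trick for $l_1+l_2$ is the clean way to do it), and the exponential identity $x^{\pm\alpha}l(x)=\exp\bigl(\log x\,(\pm\alpha + \log l(x)/\log x)\bigr)$ for item 5. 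You are also right to flag the composition clause as the one genuinely non-elementary step: passing from $s(x)=l_2(tx)/l_2(x)\to 1$ and $y=l_2(x)\to\infty$ to $l_1(s(x)y)/l_1(y)\to 1$ does require the uniform convergence theorem, since pointwise slow variation alone does not control $l_1$ along a moving argument. The only place where I would ask for slightly more care is item 4: the induction should make explicit that positive constants are slowly varying, that each monomial with positive coefficient is slowly varying by items 2 and 3, and that the quotient of the two polynomial pieces is handled by item 2 with $\alpha=-1$ followed by the product rule; the ``denominator bounded away from zero'' remark is really about the denominator being a genuine positive (hence slowly-varying) function rather than about any asymptotic degeneracy. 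None of this affects correctness.
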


\begin{lm}
\label{lemma:sv_of_power}
If $l_1(x)$ is slowly-varying, then $l_2(x) = l_1(x^a)$ is slowly-varying for $a > 0$.
\end{lm}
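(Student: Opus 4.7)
The plan is a direct substitution argument using the definition of slowly-varying functions. I need to verify that $l_2(x) = l_1(x^a)$ is positive (inherited from $l_1$ since $a>0$ implies $x^a > 0$ for $x>0$) and that $\lim_{x\to\infty} l_2(tx)/l_2(x) = 1$ for every $t>0$.

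First, I would write out the ratio
\begin{equation*}
\frac{l_2(tx)}{l_2(x)} = \frac{l_1((tx)^a)}{l_1(x^a)} = \frac{l_1(t^a x^a)}{l_1(x^a)}.
\end{equation*}
Then I would perform the change of variables $y = x^a$ and $s = t^a$. Since $a > 0$, we have $y \to \infty$ as $x \to \infty$, and $s > 0$ since $t > 0$. Hence
\begin{equation*}
\lim_{x \to \infty} \frac{l_2(tx)}{l_2(x)} = \lim_{y \to \infty} \frac{l_1(s y)}{l_1(y)} = 1,
\end{equation*}
where the last equality uses the assumption that $l_1$ is slowly-varying applied at the point $s > 0$. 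This establishes the slowly-varying property for $l_2$.

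There is no significant obstacle here: the statement is essentially a closure property of $\mathcal{RV}_0$ under the power transformation $x \mapsto x^a$ with $a > 0$, and the argument reduces to a one-line substitution. The only minor subtlety is to ensure the substitution is valid, which follows from $a > 0$ guaranteeing that the map $x \mapsto x^a$ is a bijection of $(0,\infty)$ onto itself that sends infinity to infinity, so the limits can be taken interchangeably.
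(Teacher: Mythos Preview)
Your proof is correct; it is the standard one-line verification via the substitution $y=x^a$, $s=t^a$. The paper in fact states this lemma without proof, so there is no argument to compare against; your direct check from the definition is exactly what is expected here.
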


\begin{lm}
\label{lemma:sv_min_max}
If $l_1, l_2$ vary slowly, so does $\max\{l_1, l_2\}$ and $\min\{l_1, l_2\}$. 
\end{lm}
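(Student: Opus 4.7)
The plan is to reduce both claims to straightforward $\epsilon$-$\delta$ arguments that exploit the monotonicity of $\max$ and $\min$ in each coordinate. Concretely, for a fixed multiplicative perturbation $t > 0$ and a fixed tolerance $\epsilon > 0$, I will invoke the slow variation of $l_1$ and $l_2$ individually to produce a threshold $X_0$ such that for every $x > X_0$,
\begin{equation*}
    (1 - \epsilon) l_i(x) < l_i(tx) < (1 + \epsilon) l_i(x), \quad i = 1, 2.
\end{equation*}

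For the $\max$ case, the key elementary fact is that if $a_i \le c \, b_i$ for $i = 1, 2$ with $c > 0$ and $a_i, b_i > 0$, then $\max\{a_1, a_2\} \le c \max\{b_1, b_2\}$; likewise with the reverse inequality. Applying this with $a_i = l_i(tx)$, $b_i = l_i(x)$, $c = 1 \pm \epsilon$, I immediately obtain
\begin{equation*}
    (1 - \epsilon) \max\{l_1(x), l_2(x)\} < \max\{l_1(tx), l_2(tx)\} < (1 + \epsilon) \max\{l_1(x), l_2(x)\}
\end{equation*}
for all $x > X_0$. Dividing through and letting $\epsilon \to 0$ yields $\max\{l_1(tx), l_2(tx)\} / \max\{l_1(x), l_2(x)\} \to 1$, which is exactly the slow-variation condition. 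The same monotonicity property holds for $\min$ (if $a_i \le c\, b_i$ then $\min\{a_1, a_2\} \le c \min\{b_1, b_2\}$), so the identical argument closes the $\min$ case.

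I do not anticipate any real obstacle: the only thing to be mildly careful about is the positivity hypothesis built into the definition of slowly-varying function used in the paper, which guarantees that $\max\{l_1, l_2\}$ and $\min\{l_1, l_2\}$ are themselves positive and that dividing by them is legitimate for $x$ large enough. No appeal to the representation theorem for slowly-varying functions is needed; the proof is purely from the definition and the two-sided monotonicity of $\max$ and $\min$.
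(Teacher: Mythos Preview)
Your argument is correct: the coordinatewise monotonicity of $\max$ and $\min$ under scaling by a positive constant is exactly what is needed, and the positivity built into the definition of slowly-varying functions makes the division legitimate. Note, however, that the paper states this lemma without proof, so there is no in-paper argument to compare against; your direct $\epsilon$-$\delta$ verification from the definition is the natural elementary route and would serve perfectly well as the omitted proof.
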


\begin{lm}
\label{lemma:sum_of_exponents_of_regular-varying}
Let $r_1$ and $r_2$ be regularly-varying functions with parameters $\beta_1 > 0$ and $\beta_2 > 0$. 
Then, the function $r$ such that $\edr^{-r} = \edr^{-r_1} + \edr^{-r_2}$, is regularly-varying with parameter $\beta = \min \{\beta_1, \beta_2\}$. 

\end{lm}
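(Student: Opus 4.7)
The plan is to show that $r$ is asymptotically equivalent to $\min\{r_1, r_2\}$ in the sense that their ratio tends to $1$, and then to verify that $\min\{r_1, r_2\}$ itself is regularly varying with index $\beta := \min\{\beta_1,\beta_2\}$. Asymptotic equivalence preserves regular variation with the same index, so these two steps combine to give the result.

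First, I would establish the two-sided bound
\[
\min\{r_1(x), r_2(x)\} - \log 2 \;\le\; r(x) \;\le\; \min\{r_1(x), r_2(x)\},
\]
which follows directly from $e^{-\min\{r_1,r_2\}} \le e^{-r_1}+e^{-r_2}\le 2 e^{-\min\{r_1,r_2\}}$ upon taking negative logarithms. Since $\beta_1,\beta_2>0$ and $l_1,l_2$ are slowly varying, property 5 of Proposition~\ref{appendix:proposition:sv_properties} gives $r_i(x) = x^{\beta_i} l_i(x) \to \infty$, hence $\min\{r_1,r_2\}(x)\to\infty$. Dividing the two-sided bound by $\min\{r_1,r_2\}(x)$ then yields $r(x)/\min\{r_1,r_2\}(x) \to 1$ as $x\to\infty$.

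Next, I would show that $\min\{r_1,r_2\} \in \mathcal{RV}_\beta$ by considering the two cases separately. If $\beta_1 \ne \beta_2$, WLOG $\beta_1 < \beta_2$; then $r_1(x)/r_2(x) = x^{\beta_1-\beta_2} l_1(x)/l_2(x) \to 0$ (again by property 5 applied to the slowly varying $l_1/l_2$ and exponent $\beta_2-\beta_1>0$), so $r_1(x) < r_2(x)$ for $x$ large enough, and $\min\{r_1,r_2\}$ coincides eventually with $r_1\in\mathcal{RV}_{\beta_1}$. If $\beta_1 = \beta_2 = \beta$, then $\min\{r_1,r_2\}(x) = x^\beta \min\{l_1,l_2\}(x)$, and $\min\{l_1,l_2\}$ is slowly varying by Lemma~\ref{lemma:sv_min_max}, so $\min\{r_1,r_2\} \in \mathcal{RV}_\beta$.

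Finally, I would combine both pieces: for any $t>0$, writing
\[
\frac{r(tx)}{r(x)} \;=\; \frac{r(tx)}{\min\{r_1,r_2\}(tx)} \cdot \frac{\min\{r_1,r_2\}(tx)}{\min\{r_1,r_2\}(x)} \cdot \frac{\min\{r_1,r_2\}(x)}{r(x)},
\]
the outer factors tend to $1$ by the asymptotic equivalence, while the middle factor tends to $t^\beta$ by the previous step, yielding $r\in\mathcal{RV}_\beta$. The only mildly delicate point is the case $\beta_1=\beta_2$, where neither $r_i$ dominates the other and one genuinely needs stability of slow variation under the $\min$ operation; the rest reduces to standard manipulations with slowly varying functions.
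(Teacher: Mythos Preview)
Your proof is correct and follows essentially the same approach as the paper: both establish $r \simeq \min\{r_1,r_2\}$ (the paper does this case-by-case by factoring $\varphi$, you do it uniformly via the sandwich $\min\{r_1,r_2\}-\log 2 \le r \le \min\{r_1,r_2\}$), then show $\min\{r_1,r_2\}\in\mathcal{RV}_\beta$ using Lemma~\ref{lemma:sv_min_max} in the equal-index case and the $r_1/r_2\to 0$ argument in the unequal case. Your sandwich bound is a slightly cleaner way to handle the asymptotic equivalence in one stroke, but the key ingredients and structure are the same.
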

\begin{proof}
Let us express function $r$ from the statement: 
    $r(x) = - \log \varphi(x) = - \log \left( \edr^{-r_1(x)} + \edr^{-r_2(x)} \right)$.
If $\beta_1 \not= \beta_2$, without loss of generality, let us assume that $\beta_1 < \beta_2$, then \begin{equation*}
    \varphi(x) = \edr^{-r_1(x)} \left( 1 + \edr^{-r_2(x) + r_1(x)} \right) = \edr^{-r_1(x)} \left( 1 + \edr^{r_2(x)\left( -1 + \frac{r_1(x)}{r_2(x)}\right)} \right).
\end{equation*}
Notice that $\frac{r_1}{r_2} \in \mathcal{RV}_{\beta_1 - \beta_2}$. 
For $\beta_1 < \beta_2$, $\frac{r_1(x)}{r_2(x)} \to 0$ when $x \to \infty$. The expression in~the exponent $r_2(x)\left( -1 + \frac{r_1(x)}{r_2(x)}\right) \simeq\footnote{We say functions $f \simeq g$ if and only if $f / g \to 1$.} -r_2(x) \to -\infty$, so the exponent $\edr^{r_2(x)\left( -1 + \frac{r_1(x)}{r_2(x)}\right)} \to 0$. Then $\varphi(x) \simeq \edr^{-r_1(x)}$ and $r(x) = - \log \varphi(x) \simeq r_1(x)$. It means that for the case $\beta_1 < \beta_2$, $r \in \mathcal{RV}_{\beta_1}$. 

Let us consider the case with equal parameters $\beta_1 = \beta_2 = \beta$, then $r_1(x) = x^{\beta} l_1(x)$, $r_2(x) = x^{\beta} l_2(x)$ with slowly-varying $l_1$ and $l_2$. 
With $l = \min\{l_1, l_2\}$, we can write
\begin{equation*}
    \varphi(x) = \edr^{-x^\beta l(x)} \left( 1 + \edr^{ - x^\beta|l_2(x) - l_1(x)|}\right).
\end{equation*}
Consider the logarithm of the latter expression
\begin{equation*}
    - \log \varphi(x) = x^\beta \left( l(x) + x^{-\beta} \log \left( 1 + \edr^{ - x^\beta|l_2(x) - l_1(x)|}\right) \right).
\end{equation*}
Since the function $l$ is slowly-varying by Lemma~\ref{lemma:sv_min_max} and $0 \le \edr^{ - x^\beta|l_2(x) - l_1(x)|} \le 1$, then $r(x) = -\log \varphi(x) \simeq x^\beta l(x) \in \mathcal{RV}_{\beta}$. 
\end{proof}



\vspace{-0.7cm}
\section{Weibull-tail properties on $\mathbb{R}$}
\label{appendix:weibull-tail_properties_proofs}

Let us firstly introduce a notion of generalized Weibull-tail random variable which has an additional property of stability: 
\begin{definition}[Generalized Weibull-tail on $\mathbb{R}_+$]
\label{def:generalized_wt}
A random variable $X$ is called \textit{generalized Weibull-tail} with tail parameter $\beta > 0$ if its survival function $\overline{F}_X$ is bounded by Weibull-tail functions of tail parameter $\beta$ with possibly different slowly-varying functions $l_1$ and $l_2$:
\begin{equation}
\label{eq:generalized_weibull-tail_def}
   \edr^{-x^{\beta} l_1(x)} \le \overline{F}_X(x) \le \edr^{-x^{\beta} l_2(x)}, \,\,\, \text{for }x>0.
\end{equation}
We note $X \sim \text{GWT}_{\mathbb{R}_+}(\beta)$.
\end{definition}

Now we define a random variable whose both right and left tails are Weibull-tail on $\mathbb{R}_+$. 
\begin{definition}[Weibull-tail on $\mathbb{R}$]
\label{def:double_weibull-tail}
A random variable $X$ on $\mathbb{R}$ is Weibull-tail on $\mathbb{R}$ with tail parameter $\beta > 0$ if both its right and left tails are Weibull-tail with tail parameter $\beta$:
\begin{equation*}
    \overline{F}_X(x) = 1 - F_X(x) = \edr^{-x^\beta l_1(x)}, \quad x > 0
\end{equation*}
\begin{equation*}
    F_X(x) = \edr^{-|x|^\beta l_2(|x|)}, \quad x < 0,
\end{equation*}
where $l_1$ and $l_2$ are slowly-varying functions. 
We note $X \sim \text{WT}_{\mathbb{R}}(\beta)$.
\end{definition}

\begin{lm}
\label{lemma:doubleWT_and_absolute_value}
\begin{enumerate}
\item[(i)] 
If random variable $X$ on $\mathbb{R}$ is $\text{WT}_{\mathbb{R}}(\beta)$, then $|X|$ is $\text{WT}_{\mathbb{R}_+}(\beta)$. 
\item[(ii)] If $X$ is asymmetric but both tails are Weibull-tail and $|X|$ is $\text{WT}_{\mathbb{R}_+}(\beta)$, then one of the tails (right or left) is $\text{WT}_{\mathbb{R}_+}(\beta)$ and the other one is $\text{WT}_{\mathbb{R}_+}(\beta')$ where $\beta' \ge \beta$. 

\item[(iii)] For symmetric distributions, $X$ is $\text{WT}_{\mathbb{R}}(\beta)$ if and only if $|X|$ is $\text{WT}_{\mathbb{R}_+}(\beta)$.
\end{enumerate}
\end{lm}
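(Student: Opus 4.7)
The plan is to reduce all three parts to Lemma~\ref{lemma:sum_of_exponents_of_regular-varying}, which controls the tail index obtained when combining two exponentials of regularly-varying functions.

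For part (i), I would decompose, for $x>0$,
\[
\mathbb{P}(|X|>x) \;=\; \mathbb{P}(X>x) + \mathbb{P}(X<-x) \;=\; \edr^{-x^{\beta}l_1(x)} + \edr^{-x^{\beta}l_2(x)},
\]
using the $\text{WT}_{\mathbb{R}}(\beta)$ hypothesis on $X$. Both exponents are regularly-varying with the same index $\beta$, so Lemma~\ref{lemma:sum_of_exponents_of_regular-varying} yields $\mathbb{P}(|X|>x)=\edr^{-x^{\beta}l(x)}$ for some slowly-varying $l$, which is exactly $|X|\in\text{WT}_{\mathbb{R}_+}(\beta)$.

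For part (ii), I would repeat the same decomposition, but now with possibly distinct right and left tail indices $\beta_1,\beta_2$:
\[
\mathbb{P}(|X|>x) \;=\; \edr^{-x^{\beta_1}l_1(x)} + \edr^{-x^{\beta_2}l_2(x)}.
\]
Lemma~\ref{lemma:sum_of_exponents_of_regular-varying} makes this the exponential of a regularly-varying function of index $\min\{\beta_1,\beta_2\}$. Comparing with the hypothesis $|X|\in\text{WT}_{\mathbb{R}_+}(\beta)$ forces $\beta=\min\{\beta_1,\beta_2\}$, so one tail of $X$ has index equal to $\beta$ and the other has index $\beta'\geq\beta$, as required.

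For part (iii), the forward direction is an immediate specialization of (i). For the converse, I would exploit symmetry, writing $\mathbb{P}(X>x)=\tfrac12\mathbb{P}(|X|>x)=\tfrac12\edr^{-x^{\beta}l(x)}$ for $x>0$, and then rewriting this in Weibull-tail form as $\edr^{-x^{\beta}l'(x)}$ with
\[
l'(x) \;=\; l(x) - \frac{\log 2}{x^\beta}.
\]
To finish, I would verify that $l'$ is positive and slowly varying on a neighborhood of infinity by invoking Proposition~\ref{appendix:proposition:sv_properties}(5): since $x^\beta l(x)\to\infty$, the correction $(\log 2)/x^\beta$ is eventually dominated by $l(x)$, so $l'(x)>0$ for large $x$ and $l'(tx)/l'(x)\to l(tx)/l(x)\to 1$ for every $t>0$. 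The left tail is handled identically by symmetry, yielding $X\in\text{WT}_{\mathbb{R}}(\beta)$.

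The only step needing care is the absorption of the factor $\tfrac12$ in (iii): one has to check that the corrected function $l'$ is positive and slowly varying, rather than blindly treating the multiplicative constant as harmless. Beyond that, all three statements are direct consequences of Lemma~\ref{lemma:sum_of_exponents_of_regular-varying}, so I do not anticipate any genuine obstacle.
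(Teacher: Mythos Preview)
Your proposal is correct and follows essentially the same route as the paper: all three parts rest on the decomposition $\overline{F}_{|X|}(x)=\overline{F}_X(x)+F_X(-x)$ combined with Lemma~\ref{lemma:sum_of_exponents_of_regular-varying}, and part~(iii) on the identity $\overline{F}_X(x)=\tfrac12\,\overline{F}_{|X|}(x)$ for symmetric $X$. One minor slip: absorbing the factor $\tfrac12$ gives $l'(x)=l(x)+\frac{\log 2}{x^\beta}$ rather than $l(x)-\frac{\log 2}{x^\beta}$, which only makes the positivity check easier.
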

\begin{proof}
\begin{enumerate}
\item[(i)] 
For $x > 0$, the cumulative distribution function of $|X|$ is the following
\begin{align*}
    F_{|X|}(x) = F_X(x) - F_X(-x).
\end{align*}
Then, $\overline{F}_{|X|}(x)$ can be expressed as a sum of the right and left tails:
\begin{align*}
    \overline{F}_{|X|}(x) =  \overline{F}_{X}(x) + F_X(-x).
\end{align*}
If $X$ is $\text{WT}_{\mathbb{R}}(\beta)$, then $|X|$ is $\text{WT}_{\mathbb{R}_+}(\beta)$ as a consequence of Definition~\ref{def:double_weibull-tail} and Lemma~\ref{lemma:sum_of_exponents_of_regular-varying}.

\item[(ii)] Let $|X|$ be $\text{WT}_{\mathbb{R}_+}(\beta)$ and $X$ has Weibull left and right tails with different tail parameters. Without loss of generality, assume that ${F}_{X}(-x)$ is Weibull-tail with tail parameter $\beta' < \beta$. According to Lemma~\ref{lemma:sum_of_exponents_of_regular-varying}, the sum survival function $\overline{F}_{|X|}(x)$ will be Weibull-tail with tail parameter $\beta' = \min\{\beta', \beta\}$. We obtained a contradiction and  ${F}_{X}(-x)$ must have tail parameter greater or equal $\beta$. If both tail parameters are greater than $\beta$, then the tails sum have the tail parameter equal to the minimum tail parameter among them which is greater than $\beta$. It means that at least one tail must have tail parameter $\beta$.

\item[(iii)] For symmetric distributions $\overline{F}_{X}(x) = {F}_{X}(-x)$ for any $x$, then $\frac12 \overline{F}_{|X|}(x) = \overline{F}_{X}(x) = {F}_{X}(-x)$ and we have the equality. 
\end{enumerate}
\end{proof}
\vspace{-0.7cm}
\begin{lm}
\label{lemma:GWT_and_absolute_value}
\begin{enumerate}
    \item[(i)] If a random variable $X$ is $\text{GWT}_{\mathbb{R}}(\beta)$, then $|X|$ is $\text{GWT}_{\mathbb{R}_+}(\beta)$.   \item[(ii)] For symmetric distributions, $X$ is $\text{GWT}_{\mathbb{R}}(\beta)$  if and only if $|X|$ is $\text{GWT}_{\mathbb{R}_+}(\beta)$.
\end{enumerate}
\end{lm}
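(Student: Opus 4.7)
The plan for part (i) is to work from the decomposition $\overline{F}_{|X|}(x) = \overline{F}_X(x) + F_X(-x)$ for $x>0$, and turn the two-sided $\GWTR(\beta)$ bounds on $X$ into one-sided $\text{GWT}_{\mathbb{R}_+}(\beta)$ bounds on $|X|$. The lower bound on $\GWTR$ directly gives
\[
\overline{F}_{|X|}(x) \ge \edr^{-x^{\beta} l_1^r(x)} + \edr^{-x^{\beta} l_1^l(x)},
\]
and the upper bound gives the same inequality with $l_2^r, l_2^l$ and the reverse direction, all valid for $x$ large enough. Since each exponent is of the form $x^\beta \cdot (\text{slowly varying})$, it lies in $\mathcal{RV}_\beta$. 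I would then invoke Lemma~\ref{lemma:sum_of_exponents_of_regular-varying}: the function $r$ defined by $\edr^{-r} = \edr^{-r_1} + \edr^{-r_2}$ with $r_1, r_2 \in \mathcal{RV}_\beta$ is itself in $\mathcal{RV}_\beta$, hence of the form $x^\beta L(x)$ for some slowly varying $L$. Applying this to both the upper and lower bounds produces slowly varying functions $L_1, L_2$ with
\[
\edr^{-x^{\beta} L_1(x)} \le \overline{F}_{|X|}(x) \le \edr^{-x^{\beta} L_2(x)},
\]
which is exactly Definition~\ref{def:generalized_wt}.

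For part (ii), the forward implication is just part (i). For the converse, assume $X$ is symmetric and $|X| \sim \text{GWT}_{\mathbb{R}_+}(\beta)$. Symmetry gives $\overline{F}_X(x) = F_X(-x) = \tfrac12 \overline{F}_{|X|}(x)$ for $x > 0$, so any bound $\edr^{-x^{\beta} l(x)}$ on $\overline{F}_{|X|}(x)$ transfers to $\tfrac12 \edr^{-x^{\beta} l(x)}$ on $\overline{F}_X(x)$ and on $F_X(-x)$. The remaining step is to absorb the constant factor $\tfrac12$ into the exponent: writing
\[
\tfrac12 \edr^{-x^{\beta} l(x)} = \edr^{-x^{\beta} \tilde l(x)}, \qquad \tilde l(x) = l(x) + \tfrac{\log 2}{x^\beta},
\]
and checking $\tilde l$ is slowly varying. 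This follows because $x^\beta l(x) \to \infty$ (item~5 of Proposition~\ref{appendix:proposition:sv_properties}), so $\tilde l(x)/l(x) \to 1$, and therefore $\tilde l(tx)/\tilde l(x) \to l(tx)/l(x) \to 1$ for every $t>0$.

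The only delicate step is the application of Lemma~\ref{lemma:sum_of_exponents_of_regular-varying} in part (i), where one must check that the hypotheses (both $r_i \in \mathcal{RV}_{\beta_i}$ with $\beta_i>0$) are met; but this is immediate once $l_i^r, l_i^l$ are slowly varying and the common power is $x^\beta$ with $\beta>0$. Everything else is bookkeeping: on the upper-bound side one uses the trivial $\edr^{-a}+\edr^{-b} \le 2\max\{\edr^{-a},\edr^{-b}\}$ idea, which is subsumed by the lemma, and on the lower-bound side one uses that the sum is at least either summand, so even if one wanted to bypass the lemma one could take $L_1 = \max\{l_1^r, l_1^l\}$, which is slowly varying by Lemma~\ref{lemma:sv_min_max}. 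Thus the proof reduces to invoking the already-proved slowly-varying calculus.
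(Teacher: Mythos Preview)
Your proof is correct and follows essentially the same approach as the paper: the decomposition $\overline{F}_{|X|}(x)=\overline{F}_X(x)+F_X(-x)$ for part~(i) and the symmetry identity $\overline{F}_X(x)=F_X(-x)=\tfrac12\overline{F}_{|X|}(x)$ for part~(ii). You are simply more explicit than the paper in invoking Lemma~\ref{lemma:sum_of_exponents_of_regular-varying} to merge the two exponentials and in absorbing the constant $\tfrac12$ into the slowly-varying function, both of which the paper leaves implicit.
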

\begin{proof}
Similarly as in Lemma~\ref{lemma:doubleWT_and_absolute_value}, we obtain $\overline{F}_{|X|}(x) =  \overline{F}_{X}(x) + F_X(-x)$ for $x \ge 0$.

\begin{enumerate}
    \item[(i)] If $X$ is $\text{GWT}_\mathbb{R}(\beta)$, then right and left tails are upper and lower-bounded by some Weibull-tail functions. Then, the sum of the right and left tails is upper and lower-bounded by these Weibull-tail functions and $|X| \sim \text{GWT}_{\mathbb{R}_+}(\beta)$. 

\item[(ii)] For symmetric distributions we have $\overline{F}_{X}(x) = {F}_{X}(-x)$ for all $x$, then $\frac12 \overline{F}_{|X|}(x) = \overline{F}_{X}(x) = {F}_{X}(-x)$ and we have the equality. 
\end{enumerate}
\end{proof}

\vspace{-0.7cm}
\begin{lm}[Power and multiplication by a constant]
\label{lemma:gen_weibull-tail_power}
If $X \sim \text{GWT}_{\mathbb{R}}(\beta)$ and the distribution of $X$ is symmetric, then $a|X|^b \sim \text{GWT}_{\mathbb{R}_+} \left( \frac{\beta}b\right)$ for $a, b > 0$. 
\end{lm}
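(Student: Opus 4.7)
The plan is to reduce the statement to a one-variable bound on the survival function of $|X|$ and then perform a change of variable $y = a x^b$, verifying at the end that the resulting bounding functions are still slowly-varying.

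First I would invoke Lemma~\ref{lemma:GWT_and_absolute_value}(ii): since $X$ is symmetric and generalized Weibull-tail on $\mathbb{R}$ with parameter $\beta$, the absolute value $|X|$ is generalized Weibull-tail on $\mathbb{R}_+$ with parameter $\beta$. Hence there exist slowly-varying functions $l_1, l_2$ such that, for $x$ large enough,
\begin{equation*}
\edr^{-x^{\beta} l_1(x)} \;\le\; \overline{F}_{|X|}(x) \;\le\; \edr^{-x^{\beta} l_2(x)}.
\end{equation*}

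Next, set $Y = a|X|^b$ with $a, b > 0$, and for $y > 0$ express the survival function via the monotone change of variables $x = (y/a)^{1/b}$:
\begin{equation*}
\overline{F}_{Y}(y) \;=\; \mathbb{P}\bigl(a|X|^b > y\bigr) \;=\; \overline{F}_{|X|}\bigl((y/a)^{1/b}\bigr).
\end{equation*}
Substituting the bounds above and using $((y/a)^{1/b})^\beta = y^{\beta/b} a^{-\beta/b}$, one obtains
\begin{equation*}
\edr^{-y^{\beta/b} \tilde l_1(y)} \;\le\; \overline{F}_{Y}(y) \;\le\; \edr^{-y^{\beta/b} \tilde l_2(y)},
\end{equation*}
where $\tilde l_k(y) = a^{-\beta/b}\, l_k\bigl((y/a)^{1/b}\bigr)$ for $k = 1,2$, valid for $y$ large enough.

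It remains to check that $\tilde l_1, \tilde l_2$ are slowly-varying. Multiplication by the constant $a^{-\beta/b}$ preserves slow variation, so it suffices to show that $y \mapsto l_k\bigl((y/a)^{1/b}\bigr)$ is slowly-varying. This follows from Lemma~\ref{lemma:sv_of_power} (slow variation is preserved under $x \mapsto x^{1/b}$) together with the elementary fact that slow variation is invariant under multiplicative rescaling of the argument, since for any $t > 0$, setting $u = (y/a)^{1/b}$, we have $l_k((ty/a)^{1/b})/l_k((y/a)^{1/b}) = l_k(t^{1/b} u)/l_k(u) \to 1$ as $y \to \infty$. This yields $Y = a|X|^b \sim \text{GWT}_{\mathbb{R}_+}(\beta/b)$.

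There is no real obstacle here: the whole argument is a bookkeeping exercise in composing the power/scaling transformation with the two-sided Weibull-tail bounds. The only care needed is in verifying that slow variation is stable under the composition $x \mapsto (x/a)^{1/b}$, which is immediate from the definition once one sets $u = (y/a)^{1/b}$.
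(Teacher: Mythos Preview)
Your proof is correct and essentially identical to the paper's: both reduce to $|X|\sim\text{GWT}_{\mathbb{R}_+}(\beta)$, perform the change of variable $x=(y/a)^{1/b}$, and verify that $\tilde l_k(y)=a^{-\beta/b}l_k((y/a)^{1/b})$ is slowly-varying via Lemma~\ref{lemma:sv_of_power}. If anything, your citation of Lemma~\ref{lemma:GWT_and_absolute_value}(ii) is more precise than the paper's reference to Lemma~\ref{lemma:doubleWT_and_absolute_value}, since the hypothesis is GWT rather than WT.
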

\begin{proof}
According to Lemma~\ref{lemma:doubleWT_and_absolute_value}, $|X| \sim \text{GWT}_{\mathbb{R}_+}(\beta)$. 
For $a, b > 0$, the tail of $Y = a|X|^b$ is 
\begin{equation*}
     \mathbb{P} \Bigl( a |X|^b \ge y \Bigr) = \mathbb{P} \left( |X|^b \ge \frac{y}{a} \right) = \mathbb{P} \Bigl( |X| \ge \left(\frac{y}{a}\right)^{1/b} \Bigr).
\end{equation*}
Since $|X|$ is generalized Weibull-tail on $\mathbb{R}_+$ with tail parameter $\beta$, $\edr^{-x^\beta l_1(x)} \le \mathbb{P} \left( |X| \ge x \right) \le \edr^{-x^\beta l_2(x)}$, 
where $l_1$ and $l_2$ are slowly-varying functions, it implies 
\begin{equation*}
     \edr^{-y^{\beta/b} \tilde l_1(y)} \le \mathbb{P} \Bigl( a |X|^b \ge y \Bigr) \le \edr^{-y^{\beta/b} \tilde l_2(y)},
\end{equation*}
where $\tilde l_i(y) = \frac{l_i((y / a)^{1/b})}{ a^{\beta/b}}$, $i = 1, 2$ are slowly-varying functions by Lemma~\ref{lemma:sv_of_power}.
It leads to the statement of the lemma.
\end{proof}

\begin{manualtheorem}{\ref{theorem:sum_of_N_gwt_rvs_on_r}}[Sum of GWT$_\mathbb{R}$ variables]
\theoremSumNGwt
\end{manualtheorem}
\begin{proof}
Let us start with $N=2$. For any random variables $X$ and $Y$, the following upper bound holds:
\begin{equation*}
    \mathbb{P}(X + Y \ge z) \le \mathbb{P}(X \ge \nicefrac{z}{2}) + \mathbb{P}(Y \ge \nicefrac{z}{2}) \le 2 \max\{\mathbb{P}(X \ge \nicefrac{z}{2}), \mathbb{P}(Y \ge \nicefrac{z}{2}) \}.
\end{equation*}

The PD condition leads to a lower bound for the sum: 
\begin{equation*}
    \mathbb{P}(X + Y \ge z) \ge \mathbb{P}(X \ge 0, Y \ge z) = \mathbb{P}(X \ge 0 | Y \ge z) \mathbb{P}(Y \ge z) \ge C \, \mathbb{P}(Y \ge z),
\end{equation*}
where constant $C > 0$. 
Thus, the sum survival function $\overline{F}_Z (z) = \mathbb{P}(X + Y \ge z)$ has the following bounds for the right tail: 
\begin{equation*}
    C \,\overline{F}_Y(z) \le \overline{F}_Z(z) \le 2 \max\{\overline{F}_X(\nicefrac{z}{2}), \overline{F}_Y(\nicefrac{z}{2})\},
\end{equation*}
where $\overline{F}_X$ and $\overline{F}_Y$ are the survival functions of $X$ and $Y$.

Let $X$ and $Y$ be generalized Weibull-tail on $\mathbb{R}$ of parameters $\beta_x$ and $\beta_y$, then for $z > 0$ large enough
\begin{equation*}
    C\, \edr^{-z^{\beta_y}l_1^r(z)} \le \overline{F}_Z(z) \le 2 \edr^{-z^{\min\{\beta_x, \beta_y\}}l_2^r(z)},
\end{equation*}
where $l_1^r = l_1^y$ is the slowly-varying function appearing in the right tail lower bound of generalized Weibull-tail $Y$ and $l_2^r(z) = \min\{ \frac{1}{2^{\beta_x}} l_2^x(\nicefrac{z}{2}), \frac{1}{2^{\beta_y}}l_2^y(\nicefrac{z}{2})\}$ is the minimum among slowly-varying functions, where $l_2^x$ and $l_2^y$ are slowly-varying functions in the right tails upper bounds of $X$ and $Y$. According to Lemma~\ref{lemma:sv_min_max}, $l_2^r(z)$ is also slowly-varying.  Similarly, we can get bounds for the left tail. Therefore, $X + Y$ is generalized Weibull-tail on $\mathbb{R}$ with tail parameter $\beta = \min\{\beta_x, \beta_y\}$. 

Similarly as above when $N=2$, bounds for the right tail of the sum $Z = X_1 + \dots + X_N$ with  survival function $\overline{F}_Z = \mathbb{P}(Z \ge z)$ are: 
\begin{equation*}
    C \overline{F}_{N}(z) \le \overline{F}_Z(z) \le N \max\{\overline{F}_{1}(\nicefrac{z}{N}), \dots,  \overline{F}_{N}(\nicefrac{z}{N})\},
\end{equation*}
where $\overline{F}_{i}$ is the survival functions of $X_i$ and constant $C > 0$. The rest of the proof is identical to the one of the case $N=2$. 
\end{proof}
The case when distributions have only right tails (or only left tails), can be considered as a particular case of the last theorem: a sum of non-negative generalized Weibull-tail random variables is non-negative generalized Weibull-tail with tail parameter equal to the minimum among those of the terms. 

\begin{manualtheorem}{\ref{theorem:product_of_double_weilbull-tail_rvs}}
[Product of independent $\text{GWT}_{\mathbb{R}}$ variables]
\theoremProductGwt
\end{manualtheorem}
\begin{proof}
Consider two independent symmetric generalized Weibull-tail random variables with tail parameters $\beta_x = \frac 1 {\theta_x}$ and $\beta_y = \frac 1 {\theta_y}$, $X\sim \text{GWT}_{\mathbb{R}} \left( \frac 1 {\theta_x} \right)$ and $Y \sim \text{GWT}_{\mathbb{R}} \left( \frac 1 {\theta_y} \right)$. From Lemma~\ref{lemma:doubleWT_and_absolute_value} and since random variables $X$ and $Y$ are symmetric, it is equivalent to $|X| \sim \text{GWT}_{\mathbb{R}_+}\left( \frac 1 {\theta_x} \right)$ and $|Y| \sim \text{GWT}_{\mathbb{R}_+} \left( \frac 1 {\theta_y} \right)$. 

The product of independent symmetric distributions is symmetric since $Z = XY =^d (-X)Y =^d -Z$. 
From Lemma~\ref{lemma:doubleWT_and_absolute_value}, $Z \sim \text{GWT}_{\mathbb{R}}(\beta)$ if and only if $|Z| \sim \text{GWT}_{\mathbb{R}_+}(\beta)$. 

Our goal is to show that for some slowly-varying functions $l_1$ and $l_2$, there exist upper and lower bounds for the survival function of $|Z|$ and $z$ large enough as follows:
\begin{align}
\label{eq:product_tail_equality}
    \edr^{- z^{\frac{1}{\theta_x + \theta_y}} l_1(z)} \le \overline{F}_{|Z|} (z) = \mathbb{P}(|XY| \ge z) \le \edr^{- z^{\frac{1}{\theta_x + \theta_y}} l_2(z)}. 
\end{align}

\begin{enumerate}
\item[(i)] \textit{Upper bound.} 
First, notice that from the concavity of the logarithm,  we have $\ln(pu + (1 - p)v) \ge p \ln u + (1 - p) \ln v$ for any $u, v > 0$ and $p \in (0, 1)$. Then $pu + (1 - p)v \ge u^p v^{1 - p}$. The change of variables $x = u^p$, $y = v^{1 - p}$ implies 
$p x^{\nicefrac 1 p} + (1 - p) y^{\nicefrac 1 {(1 - p)}} \ge xy$.
From the latter equation, an upper bound of the product tail is 
\begin{equation}
\label{eq:product_upper_bound}
    \mathbb{P}(|XY| \ge z) \le \mathbb{P} \left( p |X|^{\nicefrac1p} + (1 - p) |Y|^{\nicefrac1{(1 - p)}} \ge z \right).
\end{equation}
Lemma~\ref{lemma:gen_weibull-tail_power} implies that  $p|X|^{\nicefrac1p} \sim \text{GWT}_{\mathbb{R}_+} \left( \frac{p}{\theta_x} \right)$ and  $(1 - p)|Y|^{\nicefrac1{1 - p}} \sim \text{GWT}_{\mathbb{R}_+} \left( \frac{1 - p}{\theta_y} \right)$. 
Taking $p = \frac{\theta_x}{\theta_x + \theta_y}$ and $1- p = \frac{\theta_y}{\theta_x + \theta_y}$, yields a sum of two independent non-negative generalized Weibull-tail random variables with tail parameter $\frac{1}{\theta_x + \theta_y}$ on the right-hand side of Equation~\eqref{eq:product_upper_bound}.
By Theorem~\ref{theorem:sum_of_N_gwt_rvs_on_r}, this sum is generalized Weibull-tail with the same tail parameter~$\frac{1}{\theta_x + \theta_y}$. 
It means that there exists a slowly-varying function $l_2$ such that the tail of product absolute value $|XY|$ is upper-bounded by
\begin{equation}
\label{eq:product_tail_upper_bound}
     \mathbb{P}(|XY| \ge z) \le \edr^{- z^{\frac{1}{\theta_x + \theta_y}} l_2(z)}.
\end{equation}

\item[(ii)] \textit{Lower bound.} 
By independence of $|X|$ and $|Y|$ we have
\begin{equation*}
    \mathbb{P}(|XY| \ge z) \ge \mathbb{P} \left( |X| \ge z^{\frac{\theta_x}{\theta_x + \theta_y}} \right)  \mathbb{P} \left( |Y| \ge z^{\frac{\theta_y}{\theta_x + \theta_y}} \right).
\end{equation*}
Since $|X|$ and $|Y|$ are generalized Weibull-tail on $\mathbb{R}_+$, we can define function $l_1(z) = l_1^x(z^{\theta_x/{(\theta_x + \theta_y)}}) + l_1^y(z^{\theta_y/{(\theta_x + \theta_y)}})$ with $l_1^x$ and $l_1^y$ being slowly-varying functions in the lower bounds of generalized Weibull-tail $|X|$ and $|Y|$. Then, $l_1$ is slowly-varying by Lemma~\ref{lemma:sv_of_power} and we have 
\begin{equation}
\label{eq:product_tail_lower_bound}
    \mathbb{P}(|XY| \ge z) \ge  \edr^{-z^{\frac{1}{\theta_x + \theta_y}} l_1(z)}. 
\end{equation}

\end{enumerate}

Combining together Equations~\eqref{eq:product_tail_upper_bound} and~\eqref{eq:product_tail_lower_bound} and  Definition~\ref{def:generalized_wt} with Lemma~\ref{lemma:doubleWT_and_absolute_value} implies the statement of the theorem. 
\end{proof}
\vspace{-0.5cm}
\section{Bayesian neural network properties}
\label{appendix:additional_lemmas}

Proofs of Section~\ref{section:bnn}. 
\begin{manuallemma}{\ref{lemma:units_dependence_condition}}
\lemmaHiddenUnitsDependence
\end{manuallemma}
\begin{proof}
The joint probability for the right tail $\mathbb{P} \left(\bigcap_{i=1}^{N} W_i X_i \ge z_i \right)$ can be expressed as 
\begin{equation}
\label{lemma:units_dependence_condition:eq:integral}
    \int_{-\infty}^{\infty} \dots  \int_{-\infty}^{\infty} \mathbb{P} \left(\bigcap_{i=1}^{N} W_i x_i \ge z_i  \Bigl. \Bigr| \bigcap_{j=1}^{N} X_j = x_j \right) f(x_1, \dots, x_N) \, \ddr x_1 \dots \ddr x_N.
\end{equation}
Independence between $W_i$ and $X_j$ yields 
\begin{equation*}
    \mathbb{P} \left(\bigcap_{i=1}^{N} W_i x_i \ge z_i  \Bigl. \Bigr|  \bigcap_{j=1}^{N} X_j = x_j \right) = \mathbb{P} \left(\bigcap_{i=1}^{N} W_i x_i \ge z_i \right) = \prod_{i=1}^{N} \mathbb{P} \left(W_i x_i \ge z_i \right),
\end{equation*}
where the last equality is due to the mutual independence of weights $W_1, \dots, W_N$.
Let $z_1 = \dots = z_{N-1} = 0$ and $z_N = z$. 
If $x_i = 0$, the probability $\mathbb{P} \left(W_i x_i \ge 0 \right) = 1$. If $x_i \not= 0$, then, due to the symmetry of $W_i$, the probability  $\mathbb{P} \left(W_i x_i \ge 0 \right) = \frac12$. 
Thus, the following lower bound holds:
\begin{equation*}
    \mathbb{P} \left(W_i x_i \ge 0 \right) \ge \frac12  \quad \text{for all } i\in\{1, \dots, N - 1\}.
\end{equation*}

Notice that 
\begin{align*}
   \int_{-\infty}^{\infty} \dots  \int_{-\infty}^{\infty}  \mathbb{P} \left( W_N x_N \ge z \right) f(x_1, \dots, x_N) \, \ddr x_1 \dots \ddr x_N = \mathbb{P} \left(W_N X_N \ge z \right). 
\end{align*}
Substituting the latter equations into Equation~\eqref{lemma:units_dependence_condition:eq:integral} leads to the lower bound:
\begin{equation*}
    \mathbb{P} \left(\bigcap_{i=1}^{N-1} W_i X_i \ge 0, W_N X_N \ge z \right) 
   \ge    \frac{1}{2^{N-1}} \mathbb{P} \left(W_N X_N \ge z \right).
\end{equation*}
By the conditional probability definition, we have 
\begin{equation*}
    \mathbb{P} \left(\bigcap_{i=1}^{N - 1} W_i X_i \ge 0 \Bigl. \Bigr| W_N X_N \ge z  \right) = \frac{ \mathbb{P} \left(\bigcap_{i=1}^{N} W_i X_i \ge 0 \right)}{\mathbb{P} \left(W_N X_N \ge z \right)} \ge  \frac{1}{2^{N-1}}. 
\end{equation*}

The proof for the left tail is identical. 
\end{proof}

\vspace{-0.7cm}
\begin{manualtheorem}{\ref{theorem:hidden_units_are_gwt}}
\theoremHiddenUnitsAreGWT
\end{manualtheorem}
\begin{proof}
The goal is to show that $h_j^{(\ell)} \sim \text{GWT}_{\mathbb{R}_+}\left( \beta^{(\ell)} \right)$ where $\frac{1}{\beta^{(\ell)}} = \frac{1}{\beta^{(1)}_w} + \dots + \frac{1}{\beta^{(\ell)}_w}$. We proceed by induction on the layer depth $\ell$. 

\begin{enumerate}
    \item[(i)] \textit{First hidden layer} 
    
For $\ell = 1$, by Lemma~\ref{lemma:units_dependence_condition}, products $w_{ij}^{(1)} h_i^{(0)}$ for $i=1, \dots, H_1$ satisfy the positive dependence condition of Definition~\ref{def:positive_dependence_condition}, thus the sum $g_j^{(1)} = \sum_i  w_{ij}^{(1)} h_i^{(0)}$ is generalized Weibull-tail on $\mathbb{R}$ with tail parameter $\beta^{(1)} = \beta^{(1)}_w$. Since ReLU function does not change the right tail and  post-activations lie completely on $\mathbb{R}_+$, we have $h_j^{(1)} \sim \text{GWT}_{\mathbb{R}_+}(\beta^{(1)})$.

 \item[(ii)] \textit{Step of induction} 

Let $h_j^{(\ell - 1)} \sim \text{GWT}_{\mathbb{R}_+}\left( \beta^{(\ell - 1)} \right)$ where $\frac{1}{\beta^{(\ell - 1)}} = \frac{1}{\beta^{(1)}_w} + \dots + \frac{1}{\beta^{(\ell - 1)}_w}$.
Since $h_j^{(\ell - 1)}$ is non-negative and $w_{ij}^{(\ell)}$ are symmetric, their product is symmetric by symmetry of the weights: $h_j^{(\ell - 1)} w_{ij}^{(\ell)}$ has the same distribution as $h_j^{(\ell - 1)} \left( -w_{ij}^{(\ell)} \right) = - h_j^{(\ell - 1)} w_{ij}^{(\ell)}$. By Theorem~\ref{theorem:product_of_double_weilbull-tail_rvs} and independence of random variables $h_j^{(\ell - 1)}$ and $w_{ij}^{(\ell)}$, we obtain that $h_j^{(\ell - 1)} w_{ij}^{(\ell)} \sim \text{GWT}_{\mathbb{R}}( \beta^{(\ell)})$ where $\frac{1}{\beta^{(\ell)}} =  \frac{1}{\beta^{(\ell - 1)}} + \frac{1}{\beta^{(\ell)}_w} = \frac{1}{\beta^{(1)}_w} + \dots + \frac{1}{\beta^{(\ell - 1)}_w} +  \frac{1}{\beta^{(\ell)}_w}$. 

According to Lemma~\ref{lemma:units_dependence_condition},  $ w_{ij}^{(\ell)}h_i^{(\ell - 1)}$, $i = 1, \dots, H_\ell$ satisfy the dependence condition of Definition~\ref{def:positive_dependence_condition}. Thus, the sum $g_j^{(\ell)} = \sum_{i=1}^{H_\ell}  w_{ij}^{(\ell)}h_i^{(\ell - 1)}$ is symmetric generalized Weibull-tail with the same tail parameter. The ReLU activation does not impact the right tail, therefore, $h_j^{(\ell)} \sim \text{GWT}_{\mathbb{R}_+}(\beta^{(\ell)})$. 

\end{enumerate}
\end{proof}

\end{document}